\declaretheorem[name=Theorem,refname={Theorem,Theorems},Refname={Theorem,Theorems}]{theorem}
\newtheoremstyle{TheoremNum}
    {\topsep}{\topsep}              
    {\itshape}                      
    {}                              
    {\bfseries}                     
    {.}                             
    { }                             
    {\thmname{#1}\thmnote{ \bfseries #3}}
\theoremstyle{TheoremNum}
\newtheorem{thmn}{Theorem}
\renewcommand{\vec}[1]{\mathbf{#1}}
\renewcommand{\Re}{\mathbb{R}}
\newcommand{\hidden}{\vec{h}}
\newcommand{\backcast}{\vec{b}}
\newcommand{\proto}{\vec{p}}
\newcommand{\prediction}{\vec{f}}
\newcommand{\predictiongpd}{\widetilde{\prediction}}
\newcommand{\name}{{NIAQUE}}
\DeclareMathOperator{\linear}{\textsc{Linear}}
\DeclareMathOperator{\prototype}{\textsc{Prototype}}
\DeclareMathOperator{\fc}{\textsc{FC}}
\DeclareMathOperator{\relu}{\textsc{ReLU}}
\DeclareMathOperator{\smape}{s\textsc{mape}}
\DeclareMathOperator{\aad}{\textsc{aad}}
\DeclareMathOperator{\bias}{\textsc{bias}}
\DeclareMathOperator{\crps}{\textsc{CRPS}}
\DeclareMathOperator{\rmse}{\textsc{rmse}}
\DeclareMathOperator{\rmsle}{\textsc{rmsle}}
\DeclareMathOperator{\coverage}{\textsc{coverage}}
\DeclareMathOperator{\ci}{\textsc{CI}}
\DeclareMathOperator{\sgn}{sgn}
\newcommand{\nbeatsinput}{\vec{x}}
\title{Probabilistic Pretraining for Neural Regression}
\author{%
  Boris N. Oreshkin \\
  SCOT Forecasting\\
  Amazon\\
  \texttt{oreshkin@amazon.com} \\
  \And
  Shiv Tavker \\
  Pricing and Promotions\\
  Amazon\\
  \texttt{tavker@amazon.com} \\
  \And
  Dmitry Efimov \\
  SCOT Forecasting\\
  Amazon\\
  \texttt{defimov@amazon.com} \\
}
\begin{document}

\maketitle

\begin{abstract}
Transfer learning for probabilistic regression remains underexplored. This work closes this gap by introducing NIAQUE, Neural Interpretable Any-Quantile Estimation, a new model designed for transfer learning in probabilistic regression through permutation invariance. We demonstrate that pre-training NIAQUE directly on diverse downstream regression datasets and fine-tuning it on a specific target dataset enhances performance on individual regression tasks, showcasing the positive impact of probabilistic transfer learning. Furthermore, we highlight the effectiveness of NIAQUE in Kaggle competitions against strong baselines involving tree-based models and recent neural foundation models TabPFN and TabDPT. The findings highlight NIAQUE's efficacy as a robust and scalable framework for probabilistic regression, leveraging transfer learning to enhance predictive performance.
\end{abstract}

\section{Introduction}
Tabular data is a cornerstone of real-world applications, spanning diverse domains such as healthcare~\citep{Rajkomar2018}, where electronic health records enable disease prediction and treatment optimization; real estate~\cite{DeCock2011}, where features like location and property characteristics drive house price prediction; energy forecasting~\cite{Olson2017}, where meteorological and historical data drive wind power generation estimates; and e-commerce \cite{McAuley2015}, where dynamic market data enables accurate product price prediction and optimization. Traditional tree-based models like Random Forests~\cite{breiman2001random}, XGBoost~\cite{Chen_2016}, LightGBM~\cite{NIPS2017_6449f44a}, and CatBoost~\cite{prokhorenkova2019catboost} have dominated tabular predictive modeling. These models are prized for their simplicity, interpretability, and strong performance across a range of structured data problems. While traditional methods remain effective, recent advancements in deep learning have introduced novel approaches for tabular data modeling.
Architectures such as TabNet~\cite{arik2021tabnet} and TabTransformer~\cite{Huang2021TabTransformer} have narrowed the performance gap with classical models while enabling end-to-end training and multimodal data integration.
At the same time, Transfer learning --- a paradigm where models are pretrained on large-scale datasets to improve downstream performance --- has been transformative in computer vision and natural language processing. However, its application to probabilistic regression for tabular data remains largely underexplored, due to significant focus on classification tasks~\citep{hollmann2023tabpfn, levin2022transfer}. In this work, we identify and bridge several key research gaps. First, we propose NIAQUE, a novel probabilistic regression model. We show, for the first time, that a \emph{probabilistic} regression model can be co-trained on multiple disjoint datasets, exhibiting positive transfer and excellent scalability when fine-tuned on unseen new regression datasets. NIAQUE compares favorably against strong tree-based baselines, Transformer baseline and existing models such as TabDPT and TabPFN, despite being trained solely on a collection of downstream probabilistic regression tasks, with no bells and whistles. Additionally, its probabilistic nature enables interpretable feature importance analysis via marginal posterior distributions, facilitating the identification of key predictive factors, enhancing model transparency, trust and reliability. Since existing multi-dataset tabular benchmarks are predominantly focused on classification problems, to support our study, we introduce a new multi-dataset regression benchmark and train multiple baseline models across all its datasets in a multi-task fashion. This benchmark comprises 101 diverse datasets from various domains, with varying sample sizes and feature dimensions. Furthermore, in a real-world case study involving Kaggle regression challenges, NIAQUE leverages cross-dataset pretraining to achieve competitive performance against highly feature engineered hand-crafted solutions and wins over TabDPT and TabPFN baselines. Our contributions can be summarized as follows.

\begin{enumerate}
\item We introduce NIAQUE, a deep probabilistic regression model, trained across diverse tabular datasets.
\item We establish a theoretical framework showing that NIAQUE approximates the inverse of the posterior distribution.
\item We empirically validate NIAQUE’s superiority over strong baselines, including CatBoost, XGBoost, LightGBM, Transformer, TabDPT and TabPFN in transfer learning setting.
\end{enumerate}

\subsection{Related Work}

\textbf{Probabilistic Regression}
This work builds on probabilistic time-series modeling approach~\cite{smyl2024anyquantile}, refining its theoretical underpinnings and extending architectural design for tabular transfer learning applications. Alternative methods, such as Neural Processes~\cite{garnelo2018neuralprocesses} and Conditional Neural Processes~\cite{garnelo2018conditional}, offer conditional probabilistic solutions to regression but are constrained to fixed-dimensional input spaces, limiting their applicability to cross-dataset, multi-task regression. Our approach effectively transfers knowledge across datasets with varying feature spaces and target domains, establishing a flexible and scalable framework for conditional probabilistic regression.

\textbf{Multi-task  and Transfer Learning} have been dominant paradigms in computer vision~\cite{sun2021monocular,radford2021learning} and language modeling~\cite{devlin2019bert}, achieving significant breakthroughs by leveraging shared representations across tasks and datasets. More recently, transfer learning has gained traction in univariate time-series forecasting~\cite{garza2023timegpt1,ansari2024chronos}, enabling improved generalization across datasets. However, in the domain of tabular data processing, probabilistic transfer learning remains underexplored, with a primary focus on training dataset-specific models for classification problems and point regression tasks. NIAQUE closes the gap by enabling probabilistic transfer across diverse tabular datasets.

\textbf{Deep Learning vs. Tree-based Models.}
Prior work has extensively benchmarked deep learning models against tree-based approaches 
for tabular data, with emphasis on classification tasks. For instance, Transformers have been evaluated across 20 classification datasets, respectively, in~\cite{muller2022transformers}, while MLPs were compared against TabNet and tree-based models on 40 classification datasets in~\cite{kadra2021well}. Similarly, \cite{grinsztajn2022why} benchmarked architectures like Transformers, ResNet, and MLPs against tree-based models across 45 datasets, where only about half were regression problems. Importantly, these models were trained independently for each dataset, limiting their applicability to transfer learning. Closest to our work, TabPFN~\citep{hollmann2023tabpfn} and TabDPT~\citep{ma2024tabdpt} advocate Transformer-based approaches to tabular tasks and consider transfer learning through retrieval. Both of them share architectural approach, with~\cite{ma2024tabdpt} putting more emphasis on real data pretraining while~\cite{hollmann2023tabpfn} focusing on synthetic data pretraining. Unlike these works, we (i) explore a different architectural approach based on deep prototype aggregation; (ii) focus on probabilistic pretraining and transfer learning, a new under-explored problem area; (iii) show that compared to TabPFN and TabDPT, our architecture probabilistically pretrained directly on a large collection of downstream regression tasks, results in better empirical accuracy in the wild on realistic Kaggle competitions.

\textbf{Permutation-invariant Representation Learning.}
In terms of architectural approach, our work builds upon advancements in permutation-invariant representations, enabling multi-task learning across datasets with variable feature spaces. \cite{oreshkin2022protores} proposed a related architecture for human pose completion in animation, which we extend for any-quantile modeling in tabular regression. Other architectures, such as PointNet~\citep{qi2017pointnet} and DeepSets~\citep{zaheer2017deepsets}, use pooling techniques to handle variable input dimensions in 3D point clouds and concept retrieval, respectively, and are further generalized by ResPointNet~\cite{niemeyer2019occupancy}. Similarly, Prototypical Networks~\citep{snell2017prototypical} leverage average-pooled embeddings for few-shot classification, and Transformer-based architectures~\citep{vaswani2017attention} have successfully demonstrated their adaptability in natural language processing tasks with variable size inputs.

\subsection{Preliminaries and Background}\label{sec:prelim}

\textbf{Notations:} Let \( \mathbb{R} \) denote the set of real numbers and $\mathcal{U}(0,1)$ the uniform distribution over the interval $(0,1)$. For a vector \( \vec{x} \), we denote its dimensionality as \( |\vec{x}| \). For a random variable \( Y \) with cumulative distribution function (CDF) \( F(y) = P(Y \leq y) \), the \( q \)-th quantile \( q \in (0,1) \) is defined as:  
\[
F^{-1}(q) = \inf \{ y \in \mathbb{R} : F(y) \geq q \}.
\]

\noindent\textbf{Problem Formulation:} Let $\mathcal{X}$ be the input feature space and $\mathcal{Y} \subseteq \mathbb{R}$ be the space of the target variable. We consider a probability distribution $\mathcal{D}$ over $\mathcal{X} \times \mathcal{Y}$. For any instance $\vec{x} \in \mathcal{X}$, the relationship between features and target variable is given by:
\begin{align}
    y = \Psi(\vec{x}, \varepsilon)
\end{align}
where $\Psi: \mathcal{X} \times \mathcal{E} \rightarrow \mathcal{Y}$ is an unknown non-linear function and $\varepsilon \in \mathcal{E}$ represents stochastic noise with unknown distribution. 

Given a finite training sample $S = \{(\vec{x}_i, y_i)\}_{i=1}^N$ drawn i.i.d. from $\mathcal{D}$, we aim to learn a probabilistic regression function $f_{\theta}: \mathbb{R}^{|\vec{x}| \times Q} \rightarrow \mathbb{R}^Q$, parameterized by $\theta \in \Theta$, which maps an input $\vec{x}$ to a $Q$-tuple of quantiles $(q_1, ..., q_Q)$, where $q_i \in (0,1)$ for $i \in [Q]$, thereby capturing the conditional distribution of $y|\vec{x}$.

\noindent\textbf{Performance Metrics}
Let $y_i$ denote the ground truth sample and $\hat{y}_{i,q}$ its $q$-th quantile prediction for a dataset with $S$ samples. To evaluate the quality of distributional predictions, we use Continuous Ranked Probability Score (CRPS). The theoretical definition of CRPS for a predicted cumulative distribution function $F$ and observation $y$ is:
\begin{equation} \label{eqn:crps_theoretical}
\text{CRPS}(F,y) = \int_{\mathbb{R}}
       \left(F(z)- \mathds{1}_{\{z \geq y\}} \right)^{2}\text{d}z,
\end{equation}
where $F: \mathbb{R} \rightarrow [0,1]$ is the predicted CDF derived from the quantile predictions,
and $\mathds{1}_{\{z \geq y\}}$ is the indicator function. For practical computation with finite samples $S$ and a discrete set of $Q$ quantiles, we approximate this using:
\begin{align}\label{eqn:crps_sample}
\crps = \frac{2}{S Q} \sum_{i=1}^S \sum_{j=1}^Q \rho(y_i, \hat{y}_{i, q_j})
\end{align}
where $\rho(y, \hat{y}_q)$ is the quantile loss function defined as:
\begin{align}\label{eqn:quantile-loss}
\rho(y, \hat{y}_q) = (y - \hat{y}_q)(q - \mathds{1}_{\{y \leq \hat{y}_q\}})
\end{align}
Additional performance metrics used in this work are defined in Appendix~\ref{sec:performance_metrics}.

\section{NIAQUE and Transfer Learning}
In this section, we present NIAQUE (Neural Interpretable Any-Quantile Estimation), a probabilistic regression model. We first introduce the any-quantile learning approach as a general solution to the probabilistic regression problem defined in Section~\ref{sec:prelim}. We prove that this approach converges to the inverse cumulative distribution function of the conditional distribution, providing a theoretical foundation for our method. We then detail NIAQUE's neural architecture, demonstrate how it enables transfer learning across diverse tabular datasets, and present an approach to model interpretability based on probabilistic considerations.

\subsection{Any-Quantile Learning}

We formulate the any-quantile learning approach by augmenting the input space to include a quantile level $q \in (0,1)$, allowing the neural network $f_{\theta}$ to learn mappings from $(\vec{x}, q)$ to the corresponding $q$-th conditional quantile of the target variable $y|\vec{x}$. Let $\widehat{y}_q = f_{\theta}(\vec{x}, q)$ represent the predicted $q$-th quantile of the conditional distribution of $y|\vec{x}$. The objective is to learn parameters $\theta$ that minimize the expected quantile loss:
\begin{align}
    \min_{\theta} \mathbb{E}_{(\vec{x},y) \sim \mathcal{D}, q \sim \mathcal{U}(0,1)}[\rho(y, f_{\theta}(\vec{x}, q))] \,,
\end{align}
where $\rho(\cdot, \cdot)$ is the quantile loss function defined in Equation~\eqref{eqn:quantile-loss}.

We use gradient descent and mini-batch to learn the parameters. Precisely, the neural network is trained on dataset of $S$ samples, $(\vec{x}_i, y_i)$ drawn from the joint probability distribution $\mathcal{D}$. During training the quantile value $q$ is sampled from $\mathcal{U}(0, 1)$ and the loss is minimized using stochastic gradient descent (SGD). For a mini-batch of size $B$, the parameter update at iteration $k$ is:
\begin{equation}
\theta_{k+1} = \theta_{k} - \eta_k \nabla_{\theta} \frac{1}{B}\sum_{i=1}^B \rho(y_i, f_{\theta}(\vec{x}_i, q_i)) \,.
\label{eqn:mini_batch_update}
\end{equation}
As $k \to \infty$, the parameters converge to the solution of the following empirical risk minimization problem~\citep{Karimi2016linear}:
\begin{equation}
\theta^{*} = \arg\min_{\theta \in \Theta} \frac{1}{S} \sum_{i=1}^S \rho(y_i, f_{\theta}(\vec{x}_i, q_i)) \,.
\label{eqn:theta_optimal}
\end{equation}
By the strong law of large numbers, as $S$ grows, the empirical risk converges to the expected quantile loss:
\begin{equation}
\mathbb{E}_{\vec{x}, y} \mathbb{E}_{q} \rho(y, f_{\theta}(\vec{x}, q)) = \mathbb{E}_{\vec{x}, y} \int_{0}^{1} \rho(y, f_{\theta}(\vec{x}, q)) dq \,.
\label{eqn:sum_to_integral}
\end{equation}
This expected loss has a direct connection to the Continuous Ranked Probability Score (CRPS), which can be expressed as an integral over quantile loss~\citep{tilmann2011comparing}:
\begin{equation}
\crps(F, y) = 2 \int_{0}^{1} \rho(y, F^{-1}(q)) dq \,.
\label{eqn:crps_expected}
\end{equation}
Based on this fact, the following theorem proves that the expected pinball loss~\eqref{eqn:sum_to_integral} is minimized when $f_{\theta}(\vec{x}, q)$ corresponds to the inverse of the posterior CDF $P_{y|\vec{x}}$.
\begin{theorem} \label{thm:crps_minimizer} 
Let $F$ be a probability measure over variable $y$ such that inverse $F^{-1}$ exists and let $P_{y,\vec{x}}$ be the joint probability measure of variables $\vec{x}, y$. Then the expected loss, $\mathbb{E}_{\vec{x},y,q}\, \rho(y, F^{-1}(q))$, is minimized if and only if $F = P_{y|\vec{x}}$.
\end{theorem}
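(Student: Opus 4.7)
My approach is to combine the representation of the expected pinball loss as a CRPS in equation~\eqref{eqn:crps_expected} with the classical fact that CRPS is a strictly proper scoring rule. Interpreting $F$ as a conditional measure $F(\cdot\mid\vec{x})$, equations~\eqref{eqn:sum_to_integral} and~\eqref{eqn:crps_expected} give
\begin{equation*}
\mathbb{E}_{\vec{x},y,q}\,\rho(y, F^{-1}(q)) \;=\; \tfrac{1}{2}\,\mathbb{E}_{\vec{x}}\,\mathbb{E}_{y\mid\vec{x}}\,\crps(F(\cdot\mid\vec{x}),\,y).
\end{equation*}
Since the outer expectation averages non-negative quantities, it suffices to establish the pointwise statement: for each fixed $\vec{x}$, the functional $F \mapsto \mathbb{E}_{y\mid\vec{x}}\,\crps(F, y)$ is minimized if and only if $F = P_{y\mid\vec{x}}$.

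\textbf{Pointwise step.} Write $G(z) = P(y\le z\mid \vec{x})$ for the true conditional CDF. Using Fubini to swap the expectation over $y$ with the integral over $z$ in the CRPS definition~\eqref{eqn:crps_theoretical}, and noting that for each fixed $z$ the indicator $\mathds{1}_{\{z\ge y\}}$ is a Bernoulli$(G(z))$ random variable, I would expand the square to obtain
\begin{equation*}
\mathbb{E}_{y\mid\vec{x}}\bigl(F(z) - \mathds{1}_{\{z\ge y\}}\bigr)^{2} \;=\; \bigl(F(z)-G(z)\bigr)^{2} + G(z)\bigl(1-G(z)\bigr).
\end{equation*}
Integrating over $z$ then yields the clean decomposition
\begin{equation*}
\mathbb{E}_{y\mid\vec{x}}\,\crps(F,y) \;=\; \int_{\mathbb{R}}\bigl(F(z)-G(z)\bigr)^{2}\,dz \;+\; \int_{\mathbb{R}} G(z)\bigl(1-G(z)\bigr)\,dz,
\end{equation*}
in which the second term is independent of $F$. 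The first term is non-negative and vanishes precisely when $F(z)=G(z)$ Lebesgue-almost everywhere on $\mathbb{R}$; because CDFs are right-continuous, a.e.\ equality upgrades to equality everywhere, giving $F = G = P_{y\mid\vec{x}}$.

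\textbf{Expected obstacle.} The argument itself is short once the CRPS identity~\eqref{eqn:crps_expected} is in hand, so the only genuine subtleties are technical: (i) justifying Fubini, which requires the integrability $\int_{\mathbb{R}} G(z)(1-G(z))\,dz < \infty$ and hence (implicitly) a finite first moment for $y\mid\vec{x}$, a standard assumption baked into the CRPS framework; and (ii) lifting Lebesgue-a.e.\ equality to pointwise equality, which is immediate from right-continuity of CDFs but should be stated explicitly since the theorem is phrased as an iff. The hypothesis that $F^{-1}$ exists already forces $F$ to be (essentially) strictly increasing, so no pathological cases arise.
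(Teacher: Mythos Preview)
Your proposal is correct and follows essentially the same route as the paper: both start from the CRPS integral representation~\eqref{eqn:crps_theoretical}, push the conditional expectation $\mathbb{E}_{y\mid\vec{x}}$ inside the $z$-integral via Fubini, use $\mathbb{E}_{y\mid\vec{x}}\mathds{1}_{\{z\ge y\}}=P_{y\mid\vec{x}}(z)$, and complete the square to isolate the non-negative term $\int(F-P_{y\mid\vec{x}})^2\,dz$. Your write-up adds welcome technical care (the finite-first-moment caveat for Fubini and the right-continuity step to upgrade a.e.\ equality to pointwise equality) that the paper's proof leaves implicit.
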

The following conclusions emerge. First, the quantile loss SGD update~\eqref{eqn:mini_batch_update} optimizes the empirical risk~\eqref{eqn:theta_optimal} corresponding to the expected loss~\eqref{eqn:sum_to_integral}. Based on~(\ref{eqn:sum_to_integral},\ref{eqn:crps_expected}) and Theorem~\ref{thm:crps_minimizer}, $f_{\theta^{\star}} = \arg\min_{f_{\theta}} \mathbb{E}_{\vec{x},y,q}\, \rho(y, f_{\theta}(\vec{x}, q))$ has a clear interpretation as the inverse CDF corresponding to $P_{y|\vec{x}}$. Second, as both the SGD iteration index $k$ and training sample size $S$ increase, and if $f_{\theta}$ is implemented as an MLP whose width and depth scale appropriately with sample size $S$, then~\cite[Theorem 1]{farrell2021Deep} implies that the SGD solution converges to $f_{\theta^{\star}}(\vec{x}, q) \equiv P_{y|\vec{x}}^{-1}(q)$. Therefore, given uniform $q \sim \mathcal{U}(0,1)$, $\widehat{y}_q = f_{\theta^{\star}}(\vec{x}, q)$ has the interpretation of a sample from the posterior distribution $p(y | \vec{x})$, which follows from the proof of the inversion method~\cite[Theorem 2.1]{devroy86nonuniform}.

\begin{figure}[t]
    \centering
    \hspace*{-0.5cm}
    \includegraphics[width=0.6\linewidth]{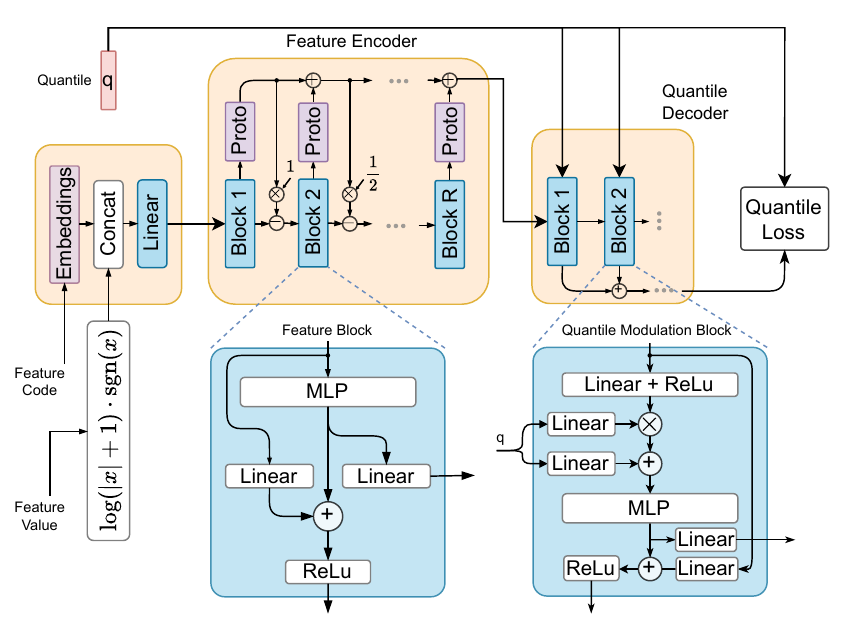}
    \caption{NIAQUE's encoder-decoder architecture transforms variable-dimensional inputs into fixed-size representations, enabling transfer learning and multi-task knowledge sharing across datasets.}
    \label{fig:niaque_architecture}
\end{figure}

\subsection{Neural Encoder-Decoder Architecture}

NIAQUE adopts a modular encoder-decoder design (Fig.~\ref{fig:niaque_architecture}) to process observation samples $\vec{x}_i$ with variable dimensionality $d_i$. The encoder maps each observation into a fixed-size latent embedding of dimension $E$, enabling downstream processing independent of input dimensionality. Feature values and associated codes (IDs) (dimension $1 \times d_i$) are embedded into a tensor of size $1 \times d_i \times E_{in}$, where $E_{in}$ is the embedding size per feature. 
These embeddings are aggregated using a prototype-based method to generate the latent observation representation. The decoder conditions this representation on arbitrary-length quantile vectors $\vec{q} \in \Re^Q$, modulating the output using FiLM-based transformations~\cite{perez2017film}. This separates input processing and quantile conditioning, achieving computational efficiency of $O(d_i + Q)$ per sample $\vec{x}_i$, compared to $O(d_i Q)$ complexity required to process quantiles and observations jointly. Note that our processing is linear if feature dimensionality, as opposed to quadratic scaling of attention-based approaches.
\\

\noindent\textbf{Inputs:} NIAQUE incorporates both raw value and learnable embedding of its ID (encoded as an integer) for each feature in the observation vector $\vec{x}$. The feature ID embedding learns feature-specific statistical properties, inter-feature dependencies, and their relationship with the target. The embedded feature ID is concatenated with its value after the log-transform:
\begin{align} \label{eqn:log_transformation}
    z = \log(|x| + 1) \cdot \sgn(x) \,,
\end{align}
which normalizes the features' dynamic range, aligning it with that of ID embeddings while preserving sign information, facilitating stable training, as validated by the ablation study in Appendix~\ref{sec:niaque-ablation-appendix}.

\noindent\textbf{Feature Encoder:} The encoder employs a two-loop residual network architecture to efficiently handle variable-dimensional inputs. The following equations define the encoder’s transformations, with the sample index $i$ 
omitted for brevity. Let the encoder input be $\vec{x}_{in} \in \Re^{d \times E_{in}}$, where $d$ is the number of features and $E_{in}$ is the embedding size per feature. A fully-connected layer $\fc_{r,\ell}$ in residual block $r \in \{1, \ldots, R\}$, layer $\ell \in \{1, \ldots, L\}$, with weights $\vec{W}_{r,\ell}$ and biases $\vec{a}_{r,\ell}$, is defined as:
$$\fc_{r,\ell}(\hidden_{r, \ell-1}) \equiv \relu(\vec{W}_{r,\ell} \hidden_{r,\ell-1} + \vec{a}_{r,\ell})\,.$$ We also define a prototype layer as: $\prototype(\vec{x}) \equiv \frac{1}{d} \sum_{i=1}^d \vec{x}[i,:]\,.$ The observation encoder is then described by the following equations:
\begin{align}
    \vec{x}_{r} &= \relu(\backcast_{r-1} - 1/(r-1) \cdot \proto_{r-1}), \label{eqn:prorez_encoder_in} \\
    \hidden_{r,1} &= \fc_{r,1}(\vec{x}_{r}), \  \ldots, \  \hidden_{r,L} = \fc_{r,L}(\hidden_{r,L-1}),  \label{eqn:prorez_encoder_mlp} \\
    \backcast_{r} &= \relu(\vec{L}_{r} \vec{x}_{r} + \hidden_{r,L}), \    \prediction_{r} = \vec{F}_{r} \hidden_{r,L}, \label{eqn:prorez_encoder_residual_1} \\
    \proto_{r} &= \proto_{r-1} + \prototype(\prediction_{r}). \label{eqn:prorez_encoder_proto}
\end{align}
These equations implement a dual-residual mechanism:

1. Equations \eqref{eqn:prorez_encoder_mlp} and \eqref{eqn:prorez_encoder_residual_1} form an MLP with a residual connection (see Feature Block in Fig.~\ref{fig:niaque_architecture} bottom left).

2. Equations~\eqref{eqn:prorez_encoder_in} and~\eqref{eqn:prorez_encoder_proto} form a second residual loop with the following key properties: a) Eq.~\eqref{eqn:prorez_encoder_proto} consolidates individual feature encodings into a prototype-based representation of the observation. b) Eq.~\eqref{eqn:prorez_encoder_in} introduces an inductive bias by enforcing a delta-mode constraint, ensuring that feature contributions are only relevant when they deviate from the existing observation embedding, $\proto_{r-1}$. c) The observation representation accumulates across residual blocks Eq.~\eqref{eqn:prorez_encoder_proto}, effectively implementing skip connections.

\noindent\textbf{Quantile Decoder:} The decoder implements a fully-connected conditioned residual architecture (Fig.~\ref{fig:niaque_architecture}, top-right) with FiLM-modulated MLP blocks (Fig.~\ref{fig:niaque_architecture}, bottom-right). Taking the observation embedding $\widetilde{\backcast}_{0}=\proto_{R} \in \Re^{E}$ as input, it generates quantile-modulated representations $\widetilde{\vec{f}}_{R} \in \Re^{Q\times E}$ for quantiles $\vec{q} \in \Re^Q$ through:

\begin{align}  \label{eqn:forward_kinematics_decoder}
\begin{split}
    \hidden_{r,1} &= \fc_{r,1}^{\textrm{QD}}(\widetilde{\backcast}_{r-1})\,, \quad \gamma_{r}, \beta_{r} = \linear_{r}(\vec{q}) \,, \\
    \hidden_{r,2} &= \fc_{r,1}^{\textrm{QD}}((1 + \gamma_{r})\cdot\hidden_{r,1} + \beta_{r})\,, \\
    &\ldots \\
    \hidden_{r,L} &= \fc_{r,L}^{\textrm{QD}}(\hidden_{r,L-1})\,,  \\
    \widetilde{\backcast}_{r} &= \relu(\vec{L}_{r}^{\textrm{QD}} \widetilde{\backcast}_{r-1} + \hidden_{r,L}), \ \    \predictiongpd_{r} = \predictiongpd_{r-1} + \vec{F}_{r}^{\textrm{QD}} \hidden_{r,L}\,.
\end{split}
\end{align}
The final prediction $\widehat{\vec{y}}_{q} \in \Re^Q$ is obtained via linear projection: $\widehat{\vec{y}}_{q} = \linear[\predictiongpd_{r}]$.

\subsection{Interpretability}\label{ssec:interpretability_theory}

NIAQUE's probabilistic framework facilitates interpretability via quantile predictions conditioned on individual features. Given $f_{\theta}(\vec{x}_{s}, q)$ as NIAQUE's estimate of quantile $q$ using only feature $\vec{x}_{s}$, the posterior confidence interval for this feature is defined as:
\begin{align}
    \ci_{\alpha,s} = f_{\theta}(\vec{x}_{s}, 1-\alpha/2) - f_{\theta}(\vec{x}_{s}, \alpha/2) \,,
\end{align}
where $1-\alpha$ represents the probability that the target true value lies within the interval. Intuitively, more informative features produce narrower confidence intervals. We leverage this to quantify feature importance through normalized weights:
\begin{equation}  \label{eqn:niaque_weights}
W_{s} = \frac{\overline{W}_{s}}{\sum_s \overline{W}_{s}}\,, \ \ \overline{W}_{s} = \frac{1}{\overline{\ci}_{0.95,s}}\,, \ \ 
\overline{\ci}_{\alpha,s} = \frac{1}{S}\sum_{i} f_{\theta}(\vec{x}_{s,i}, 1-\alpha/2) - f_{\theta}(\vec{x}_{s,i}, \alpha/2) \,.
\end{equation}
where $\overline{\ci}_{\alpha,s}$ is the average confidence interval width over validation samples $\{\vec{x}_{i} : \vec{x}_{i} \in \mathcal{D}_{\text{val}}\}$. To enhance marginal distribution modeling and support interpretability, we introduce single-feature samples during training, comprising approximately 5\% of the dataset. An ablation study in Appendix~\ref{sec:niaque-ablation-appendix} confirms the necessity of this augmentation for robust feature importance estimation.

\subsection{Transfer Learning}

NIAQUE facilitates effective transfer learning through two core mechanisms ensuring effective knowledge transfer across diverse tabular regression tasks with varying feature spaces. First, its feature ID embeddings learn both dataset-specific and cross-dataset relationships, as evidenced by the structured representation space observed in Fig.~\ref{fig:niaque_interpretability_and_clusters}.
Second, its prototype-based aggregation enables flexible processing of arbitrary feature combinations, inherently supporting cross-dataset learning. 

The transfer learning process consists of two phases. During pretraining, NIAQUE learns from multiple heterogeneous datasets simultaneously, sampling rows uniformly at random from all datasets, with each dataset contributing only its relevant features. The model processes these diverse inputs through shared parameters, learning both task-specific characteristics and generalizable patterns. The learned embeddings capture statistical properties at multiple levels: individual feature distributions, feature interactions within datasets, and common patterns across different regression problems. This pretrained knowledge can then be leveraged in two ways: (1) Zero-shot transfer to seen datasets, enabling immediate application without additional training. The model learns to distinguish and process dataset-specific features through their semantic embeddings, whereas shared model parameters enable knowledge transfer across related datasets via multi-task training. (2) Few-shot transfer to unseen datasets through fine-tuning, where the pretrained feature representations provide a strong foundation for learning new tasks with minimal data via strong regression prior stored in pretrained model weights.

The effectiveness of this approach stems from NIAQUE's ability to maintain dataset-specific information while learning transferable representations. Learnable feature ID embeddings act as task identifiers, allowing the model to adapt its processing based on the combination of input features, effectively serving as an implicit task ID.  This capability is particularly valuable in tabular domains, where feature relationships and their predictive power can vary significantly across tasks. Our experimental results validate both the efficacy of cross-dataset pretraining and the model’s adaptability to novel regression tasks via fine-tuning, establishing NIAQUE as a robust framework for transfer learning in tabular domains.

\section{Empirical Results}
We conduct extensive experiments to evaluate NIAQUE's effectiveness for transfer learning in tabular regression. Our evaluation addresses three key aspects: 
1) the model's transfer learning capabilities across both seen and unseen tasks, 
2) the quality of learned representations and interpretability, and 
3) its practical effectiveness in a real-world competition setting.


\subsection{Datasets and Experimental Setup}

\textbf{Dataset and Evaluation:} We introduce TabRegSet-101 (Tabular Regression Set 101), a curated collection of 101 publicly available regression datasets gathered from UCI~\cite{kelly2017uci}, Kaggle~\cite{kaggle2024kaggle}, PMLB~\cite{romano2021pmlb,olson2017pmlb}, OpenML~\cite{vanschoren2013openml}, and KEEL~\cite{fdez2011keel}. These datasets span diverse domains, including \textit{Housing and Real Estate}, \textit{Energy and Efficiency}, \textit{Retail and Sales}, \textit{Computer Systems}, \textit{Physics Models} and \textit{Medicine} and exhibit different characteristics in terms of sample size and feature dimensionality. The datasets, along with their sample count, number of variables and source information are listed in Appendix~\ref{sec:lprm-101}. To ensure balanced evaluation across datasets of varying sizes, we limit the maximum samples per dataset to 20,000 through random sub-sampling without replacement. Target variables are normalized to [0, 10] to standardize metric scales across tasks. We employ a stratified 80/10/10 - train/validation/test split at the dataset level, ensuring representative samples from each dataset in all splits. We use both point prediction metrics (SMAPE, AAD, RMSE, BIAS) and distributional accuracy metrics (CRPS, COVERAGE) as defined in Section~\ref{sec:prelim} and Appendix~\ref{sec:performance_metrics}. CRPS is computed over a set of $Q=200$ quantiles sampled uniformly at random. All metrics are computed on the test split and averaged across datasets.

\noindent\textbf{Baselines:} We compare NIAQUE against traditional tree-based models and deep learning approaches:
a) Tree-based models: XGBoost~\citep{Chen_2016}, LightGBM~\citep{NIPS2017_6449f44a}, and CatBoost~\citep{prokhorenkova2019catboost}
b) Deep learning: 
Transformer encoder with NIAQUE quantile decoder (details in Appendix~\ref{sec:transformer-baseline}). We evaluate three training scenarios: a) Global models (denoted by the -Global suffix): trained jointly on all datasets. b) Domain-specific models (suffix -Domain): trained on datasets from the same domain (e.g., housing, medical). c) Local models (suffix -Local): trained individually per dataset. XGBoost and CatBoost are trained using multi-quantile loss with fixed quantiles, with additional quantiles obtained through linear interpolation. LightGBM, is trained with separate models per quantile. For training global tree-based models, we construct a unified table containing samples from all datasets, filling missing features with NULL values.

\begin{table}[t]
\caption{Performance comparison across all 101 datasets. Lower values are better for all metrics except COVERAGE @ 95 (target: 95). For BIAS, lower absolute values are better.}
\centering
\begin{tabular}{l|cccc|cc}
\toprule
Model           & SMAPE & AAD   & RMSE  & BIAS  & COV@95 & CRPS  \\
\midrule
XGBoost-Global  & 31.4  & 0.574 & 1.056 & -0.15 & 94.6  & 0.636 \\
XGBoost-Local   & 25.6  & 0.433 & 0.883 & -0.03 & 90.8  & 0.334 \\
LightGBM-Global & 27.5  & 0.475 & 0.930 & -0.06 & 94.8  & 0.426 \\
LightGBM-Local  & 25.7  & 0.427 & 0.865 & -0.03 & 91.5  & 0.327 \\
CatBoost-Global & 31.3  & 0.561 & 1.030 & -0.12 & 94.9  & 0.443 \\
CatBoost-Local  & 24.3  & 0.408 & 0.840 & -0.03 & 92.7  & 0.315 \\
\midrule
Transformer-Local  & 26.9  & 0.462 & 0.904 & -0.05 & 93.6  & 0.329 \\
Transformer-Global & 23.1  & 0.383 & 0.806 & -0.01 & 94.6  & 0.272 \\
NIAQUE-Local       & 22.8  & 0.377 & 0.797 & -0.03 & 94.9  & 0.267 \\
NIAQUE-Global      & 22.1  & 0.367 & 0.787 & -0.02 & 94.6  & 0.261 \\
\bottomrule
\end{tabular}%
\label{tab:101-performance}
\end{table}

\noindent\textbf{Implementation Details:} NIAQUE uses encoder and decoder containing 4 residual blocks, 2 layers each with latent dimension $E=1024$ and input embedding size $E_{in}=64$. Training uses Adam optimizer with initial learning rate $10^{-4}$ and batch size 512. The learning rate is reduced by 10× at 500k, 600k, and 700k batches. We apply feature dropout with rate 0.2. Hyperparameters are selected using validation split and metrics are computed on the test split. Training requires approximately 24 hours for NIAQUE and 48 hours for Transformer on 4×V100 GPUs. In comparison, XGBoost training takes about 30 minutes on a one V100 for 3 quantiles, scaling linearly with the number of quantiles. All models are trained on the same train splits, with samples drawn uniformly at random across datasets. During training, quantile values are randomly generated for each instance in a batch.

\subsection{Results} 

\noindent\textbf{Cross-Dataset Learning}. To evaluate NIAQUE's ability to handle large-scale multi-dataset learning, we conduct experiments across all 101 datasets simultaneously. Table~\ref{tab:101-performance} presents results, aggregated across datasets at sample level, comparing global and local training scenarios for various models. NIAQUE-Global achieves the best performance, significantly outperforming both traditional tree-based methods and the Transformer baseline. Notably, while tree-based methods show better performance in local training compared to their global variants (e.g., CatBoost-Local SMAPE: 24.3 vs CatBoost-Global: 31.3), NIAQUE maintains superior performance in both scenarios, with its global model outperforming its local counterpart. Furthermore, NIAQUE maintains reliable uncertainty quantification across all scenarios, with coverage staying close to the target 95\% level and consistently lower CRPS values compared to baselines. These results confirm NIAQUE's capacity to leverage cross-dataset learning effectively, maintaining or even improving performance on individual tasks through robust feature representations that generalize across diverse datasets and domains.

\noindent\textbf{Adaptation to New Tasks on TabRegSet-101}. To evaluate NIAQUE's transfer learning capabilities on unseen tasks, we randomly split our collection of 101 datasets into 80 pretraining datasets and 21 held-out test datasets. We compare two scenarios: training from scratch (NIAQUE-Scratch) and fine-tuning a pretrained model (NIAQUE-Pretrain). The pretrained model is first trained on the 80 datasets and then fine-tuned on each held-out dataset using a 10 times smaller learning rate. To assess the impact of data scarcity, we evaluate both models by varying the fine-tuning data proportion ($p_s$) of the held-out datasets while maintaining constant test sets. Results in Table~\ref{tab:transfer-learning-finetune} demonstrate that:
1) The pretrained model consistently outperforms training from scratch across all metrics.
2) The performance gap widens as training data becomes scarcer (smaller $p_s$).
3) Both models maintain reliable uncertainty estimates, as evidenced by COVERAGE @ 95 values. These results validate that NIAQUE effectively transfers pretrained knowledge to novel regression tasks, with improvements particularly pronounced in low-data scenarios. Note that these results are not directly comparable with those in Table~\ref{tab:101-performance}, as they are based on different dataset splits (21 vs. 101 datasets).

\begin{table}[t]
    \centering
    \caption{Transfer learning results on held-out datasets. $p_s$ represents the proportion of training data used for fine-tuning, ranging from 0.05 (5\%) to 1.0 (100\%). Lower values are better for all metrics except COVERAGE @ 95 (target: 95). For BIAS, lower absolute values are better.
    }
        \begin{tabular}{l|l|ccccc}
        \toprule
         & NIAQUE & $p_s$=0.05 & 0.1 & 0.25 & 0.5 & 1.0 \\
        \midrule
        SMAPE & Scratch & 28.0 & 24.7 & 21.7 & 20.8 & 19.4 \\
              & Pretrain & 23.5 & 21.9 & 20.3 & 18.7 & 17.7 \\
        \midrule
        AAD   & Scratch & 0.71 & 0.60 & 0.56 & 0.54 & 0.49 \\
              & Pretrain & 0.61 & 0.57 & 0.54 & 0.50 & 0.47 \\
        \midrule
        RMSE  & Scratch & 1.23 & 1.10 & 1.06 & 1.04 & 0.96 \\
              & Pretrain & 1.11 & 1.08 & 1.04 & 0.97 & 0.94 \\
        \midrule
        BIAS  & Scratch & -0.06 & -0.04 & -0.04 & 0.02 & -0.04 \\
              & Pretrain & -0.06 & -0.07 & -0.06 & -0.06 & -0.04 \\
        \midrule
        CRPS  & Scratch & 0.488 & 0.423 & 0.392 & 0.383 & 0.351 \\
              & Pretrain & 0.427 & 0.404 & 0.380 & 0.354 & 0.334 \\
        \midrule
        COV@95 & Scratch & 93.3 & 93.0 & 94.4 & 93.1 & 94.4 \\
               & Pretrain & 95.3 & 94.4 & 94.2 & 93.9 & 94.6 \\
        \bottomrule
        \end{tabular}
    \label{tab:transfer-learning-finetune}
\end{table}

\noindent\textbf{Adaptation to New Tasks on Kaggle Competitions}. To validate NIAQUE's practical effectiveness in the wild, we evaluate its performance in recent Kaggle competitions: Regression with an Abalone Dataset~\citep{playground-series-s4e4}, Regression with a Flood Prediction Dataset~\citep{playground-series-s4e5}
Our approach involves two stages: pretraining and fine-tuning. First, we pretrain NIAQUE on TabRegSet-101 using our quantile loss framework. Then, we fine-tune the pretrained model on the competition's training data, optimizing for target metric. To systematically evaluate the impact of proposed pretraining strategy and architecture, we show NIAQUE-Scratch baseline (trained only on competition data, no pretraining), a number of tree-based baselines as well as TabDPT and TabPFN pretrained models. While TabDPT and TabPFN show promising results on small datasets, they face significant scalability challenges—TabPFN is limited to 10,000 samples and TabDPT requires substantial context size reduction for large datasets (details in Appendix~\ref{appendix:tabpfn-tabdpt-baselines}). On the other hand, our approach shows very strong scalability and accuracy results on the competition datasets, outperforming vanilla tree-based models as well as TabDPT and TabPFN baselines. Additional results in Appendices~\ref{sec:kaggle-competitions-abalone},~\ref{sec:kaggle-competitions-flood}
show how our approach, without significant manual interventions, further benefits from advanced automatic feature engineering (OpenFE~\citep{zhang2023openfe}) and ensembling thereby  rivalling results of human competitors. These results are particularly significant given that neural networks were generally considered ineffective for these competitions.

\begin{table}[t]
\centering
\caption{Performance comparison on Kaggle competition datasets: Abalone (RMSLE, lower is better), Flood Prediction (R2, higher is better).
}
\begin{tabular}{lcc}
\toprule
Model & \makecell{Abalone \\ RMLSE} & \makecell{Flood Prediction \\ R2 Score} \\ 
\midrule
XGBoost~\cite{xgboost-tabnet-abalone, xgboost-floodpredict} & 0.15019 & 0.842 \\ 
LightGBM~\cite{lightgbm-abalone, lightgbm-floodpredict} & 0.14914 & 0.766 \\ 
CatBoost~\cite{catboost-abalone, catboost-floodpredict} & 0.14783 & 0.845 \\ 
TabNet~\cite{xgboost-tabnet-abalone} & 0.15481 & 0.842 \\ 
TabDPT~\cite{} & 0.15026 & 0.804 \\ 
TabPFN~\cite{} & 0.15732 & 0.431 \\ 
NIAQUE-Scratch & 0.15047 & 0.865 \\ 
NIAQUE-Pretrain & 0.14808 & 0.867 \\ 
Winner~\citep{first-place-abalone,first-place-flood-prediction} & 0.14374 & 0.869 \\ 
\bottomrule
\end{tabular}
\label{tab:kaggle-competition}
\end{table}

\noindent\textbf{Learned Representations} are studied qualitatively in Fig.~\ref{fig:niaque_interpretability_and_clusters} (left) showing UMAP projections~\citep{mcinnes2018umap} of dataset row embeddings derived from NIAQUE's feature encoder. The encoder maps input features $\vec{x}_i$ to a fixed-dimensional latent space using a prototype-based aggregation mechanism. The resulting UMAP visualization reveals distinct dataset-specific clusters, indicating that NIAQUE learns representations that capture dataset-specific characteristics while maintaining a shared latent space that enables effective transfer learning. 

\noindent\textbf{Feature Importance} is based on the inverse of the average confidence interval derived from feature's marginal distribution, as detailed in Equation~\eqref{eqn:niaque_weights}. Qualitatively,  Fig.~\ref{fig:niaque_interpretability_and_clusters} (right), indicates that features with higher weights (i.e., smaller average confidence intervals) are most critical to prediction accuracy---removing these features significantly increases the AAD metric, whereas eliminating features with lower weights has minimal impact. Quantitatively, two-sided t-test comparing the impact of removing the most vs. least important features (Top-1 vs. Bot-1 AAD values in Fig.~\ref{fig:niaque_interpretability_and_clusters}) ($t = -50.24$, p-value $\approx$ 0) along with the effect size (Cohen’s $d = 0.22$) indicate significant and practically impactful effect across datasets. Additionally, we computed SHAP values with shap.SamplingExplainer across 101 datasets and used them as a reference ranking. For each dataset, we computed the NDCG score between the SHAP ranking and our model-native importance ranking, then averaged the results. The average NDCG was 0.899, while computation time was reduced from ~3.5h × 8 GPUs to ~20s × 1 GPU for all 101 datasets. The NDCG score of 0.9 generally demonstrates very strong ranking alignment with established attribution method at a fraction of the cost.

\begin{figure}[t]
    \centering
    
    \begin{subfigure}[b]{0.58\textwidth}
        \centering
        \includegraphics[width=\textwidth]{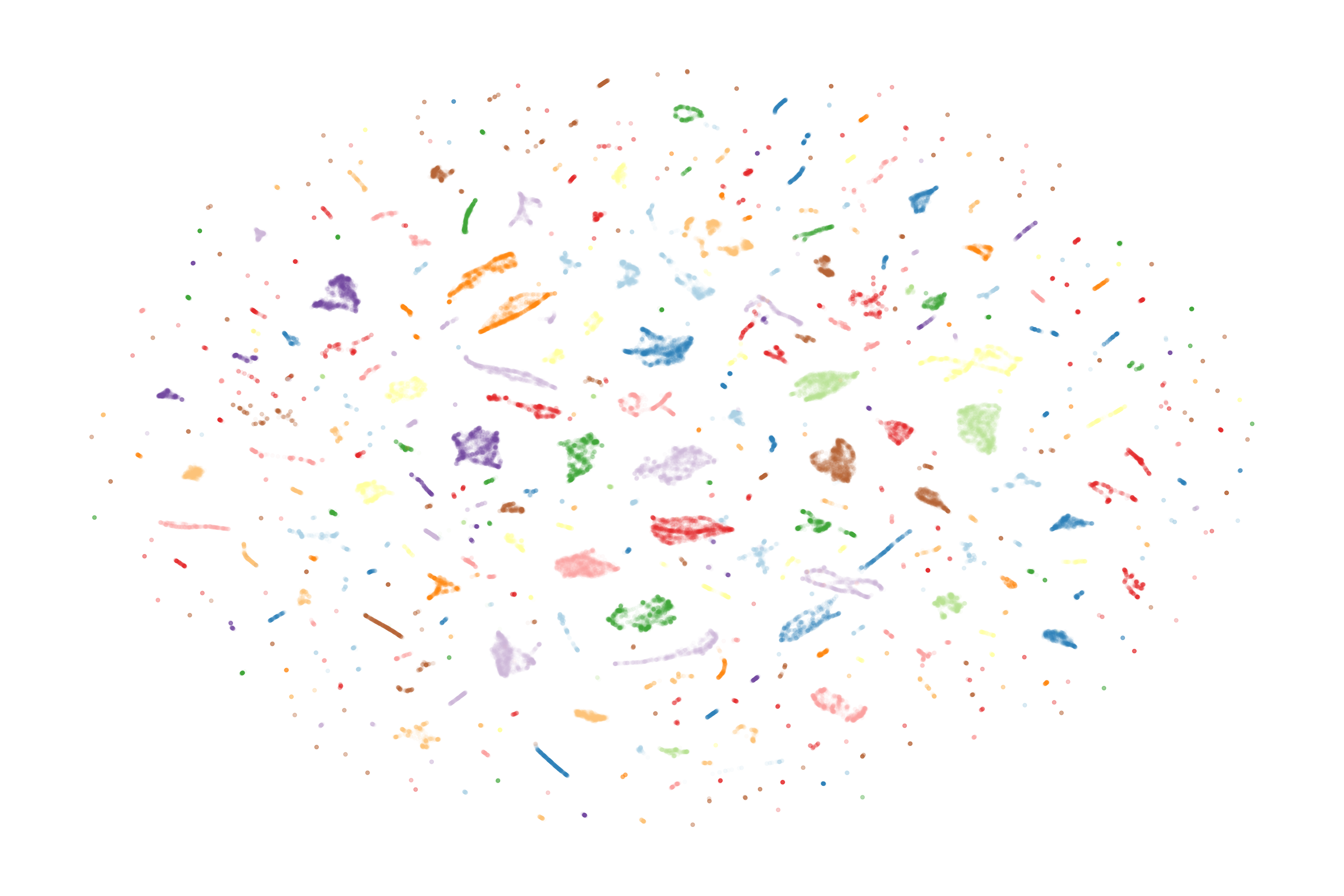}
    \end{subfigure}
    \begin{subfigure}[b]{0.4\textwidth}
        \centering
        \includegraphics[width=\textwidth]{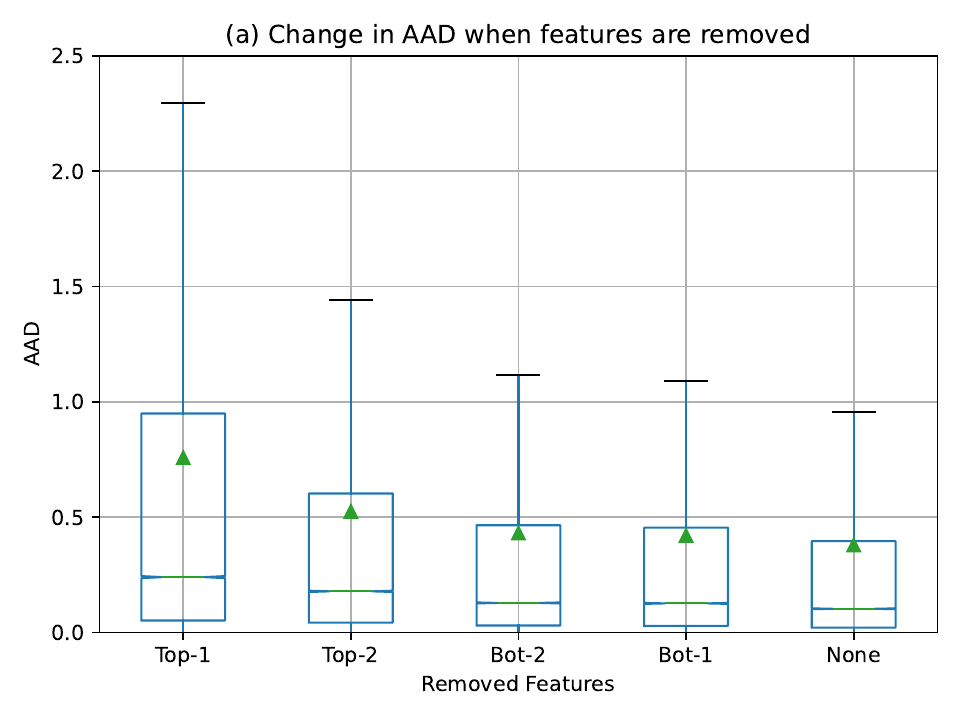}
        \label{fig:interpretability_analysis}
    \end{subfigure}
    \vspace{-0.4cm}
    \caption{UMAP projections of embeddings derived from NIAQUE's feature encoder for each sample, colored by dataset (left). NIAQUE accuracy response to the removal of input features by importance (right). Top-rated features have the greatest impact on AAD degradation when removed.
    }
    \label{fig:niaque_interpretability_and_clusters}
\end{figure}

\noindent\textbf{Ablation Studies} are detailed in Appendices~\ref{sec:transformer-baseline}--~\ref{sec:niaque-ablation-appendix} and validate NIAQUE's architectural choices. Our experiments reveal following key findings. First, the choice of encoder design compared to transformer/attention produces more accurate results. Second, the log-transformation of input values (Equation~\eqref{eqn:log_transformation}) significantly improves both training stability and prediction accuracy. Third, NIAQUE shows robustness to variations in hyperparameters and benefits from increased depth. Finally, including single-feature samples during training enables the interpretability mechanism without compromising the model's prediction accuracy.

\section{Conclusion}

Our study demonstrates that NIAQUE enables effective transfer learning for tabular probabilistic regression, challenging the belief that tabular data resists neural modeling and generalization. Across 101 datasets and a Kaggle case studies, NIAQUE shows strong empirical performance, delivering scalability via a unified architecture, data efficiency through cross-dataset pretraining, and robust generalization across tasks. Its probabilistic formulation further supports uncertainty quantification and feature importance estimation—useful for real-world deployment. While promising, open questions remain regarding the optimal scale of transfer, the design of universally effective feature preprocessing, and the theoretical principles underlying cross-domain generalization. Our findings suggest a path forward for building large-scale, interpretable tabular foundation models.

\clearpage
\bibliographystyle{plainnat}
\bibliography{sample-base}

\appendix

\clearpage
\onecolumn

\section{Performance Metrics} \label{sec:performance_metrics}

Coverage measures empirical calibration of predictive confidence intervals. For confidence level $\alpha \in (0,1)$:
\begin{align}
\coverage(\alpha) = \frac{1}{S} \sum_{i=1}^S \mathds{1}[y_i > \hat y_{i, 0.5-\alpha/2}] \mathds{1}[y_i < \hat y_{i, 0.5+\alpha/2}] \cdot 100\% \nonumber
\end{align}

We employ the following metrics to evaluate the accuracy of point predictions:

1) Symmetric Mean Absolute Percentage Error (sMAPE):
\begin{align}
    \smape = \frac{2}{S} \sum_{i=1}^S \frac{|y_i - \hat y_{i,0.5}|}{|y_i| + |\hat y_{i,0.5}|} \cdot 100\% \,.
\end{align}

2) Average Absolute Deviation (AAD):
\begin{align}
    \aad = \frac{1}{S} \sum_{i=1}^S |y_i - \hat y_{i,0.5}|\,.
\end{align}

3) Bias:
\begin{align}
    \bias = \frac{1}{S} \sum_{i=1}^S  \hat y_{i,0.5} - y_i\,.
\end{align}

4) Root Mean Square Error (RMSE):
\begin{align}
    \rmse = \sqrt{\frac{1}{S} \sum_{i=1}^S (y_i - \hat y_{i,0.5})^2}\,.
\end{align}

5) Root Mean Square Logarithmic Error (RMSLE):
\begin{align}
    \rmsle = \sqrt{\frac{1}{S} \sum_{i=1}^S (\log(y_i + 1) - \log(\hat y_{i,0.5} + 1))^2}\,.
\end{align}

\section{Proof of Theorem~\ref{thm:crps_minimizer}} \label{sec:proof_of_theorem1}

\begin{thmn} 
Let $F$ be a probability measure over variable $y$ such that inverse $F^{-1}$ exists and let $P_{y,\vec{x}}$ be the joint probability measure of variables $\vec{x}, y$. Then the expected loss, $\mathbb{E}\, \rho(y, F^{-1}(q))$, is minimized if and only if:
\begin{equation}
F = P_{y|\vec{x}}\,.
\end{equation}
Additionally:
\begin{equation}
\min_{F} \mathbb{E}\, \rho(y, F^{-1}(q)) = \mathbb{E}_{\nbeatsinput} \frac{1}{2} \int_\Re P_{y|\nbeatsinput}(z)(1 - P_{y|\nbeatsinput}(z)) \textrm{d}z\,.
\end{equation}
\end{thmn}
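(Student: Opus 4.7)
The main idea is to convert the expected pinball loss into an expected CRPS via the identity~\eqref{eqn:crps_expected}, then to minimize the resulting $L^{2}$-type integrand pointwise in $(\vec{x}, z)$. Concretely, by~\eqref{eqn:crps_expected}, for every fixed pair $(\vec{x}, y)$,
\begin{equation*}
\mathbb{E}_{q \sim \mathcal{U}(0,1)}\, \rho(y, F^{-1}(q)) \;=\; \int_{0}^{1} \rho(y, F^{-1}(q))\,\mathrm{d}q \;=\; \tfrac{1}{2}\,\crps(F, y),
\end{equation*}
so, substituting the integral form~\eqref{eqn:crps_theoretical},
\begin{equation*}
\mathbb{E}_{\vec{x}, y, q}\, \rho(y, F^{-1}(q)) \;=\; \tfrac{1}{2}\,\mathbb{E}_{\vec{x}}\,\mathbb{E}_{y|\vec{x}} \int_{\mathbb{R}} \bigl(F(z) - \mathds{1}_{\{z \geq y\}}\bigr)^{2} \mathrm{d}z.
\end{equation*}
Here $F$ is understood as depending on $\vec{x}$, since $P_{y|\vec{x}}$ does; this is the natural setting for conditional quantile estimation.

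Next, assuming the integrand is well-behaved enough for Fubini (the squared discrepancy is nonnegative and bounded by $1$), I would interchange the inner expectation over $y|\vec{x}$ with the integration over $z$ to obtain
\begin{equation*}
\mathbb{E}_{\vec{x}, y, q}\, \rho(y, F^{-1}(q)) \;=\; \tfrac{1}{2}\,\mathbb{E}_{\vec{x}} \int_{\mathbb{R}} \mathbb{E}_{y|\vec{x}}\bigl(F(z) - \mathds{1}_{\{y \leq z\}}\bigr)^{2} \mathrm{d}z.
\end{equation*}
For fixed $\vec{x}$ and $z$, write $p := P_{y|\vec{x}}(z) = \mathbb{E}_{y|\vec{x}}[\mathds{1}_{\{y \leq z\}}]$ and $u := F(z)$. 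Because $\mathds{1}_{\{y \leq z\}}^{2} = \mathds{1}_{\{y \leq z\}}$, the inner expectation simplifies to the scalar quadratic $u^{2} - 2 u p + p$, which is strictly convex in $u$ and uniquely minimized at $u = p$, with minimum value $p(1-p)$.

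Pointwise optimality then yields the characterization: $F(z) = P_{y|\vec{x}}(z)$ (almost everywhere in $z$, almost surely in $\vec{x}$) is the unique minimizer, proving the ``if and only if'' claim. Substituting $u^{\star} = p$ back and reassembling the outer expectation and the $\tfrac12$ factor gives
\begin{equation*}
\min_{F} \mathbb{E}\, \rho(y, F^{-1}(q)) \;=\; \mathbb{E}_{\vec{x}} \tfrac{1}{2} \int_{\mathbb{R}} P_{y|\vec{x}}(z)\bigl(1 - P_{y|\vec{x}}(z)\bigr)\,\mathrm{d}z,
\end{equation*}
which is the second claim. The main technical subtlety, and the only nontrivial step, is justifying the Fubini interchange and the pointwise-to-global optimality passage: one must argue that a minimizer of the integrated risk among CDFs $F(\,\cdot\,|\vec{x})$ is attained by the pointwise minimizer (which is automatic because $P_{y|\vec{x}}(\,\cdot\,)$ is itself a valid CDF, being nondecreasing, right-continuous, and taking values in $[0,1]$). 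Uniqueness follows from strict convexity of the quadratic, up to a null set of $(\vec{x}, z)$.
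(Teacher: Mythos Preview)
Your proposal is correct and essentially identical to the paper's proof: both convert the expected pinball loss to $\tfrac{1}{2}\,\mathbb{E}\,\crps(F,y)$ via~\eqref{eqn:crps_expected}, apply Fubini to push $\mathbb{E}_{y|\vec{x}}$ inside the $z$-integral, expand using $\mathbb{E}_{y|\vec{x}}\mathds{1}_{\{z\ge y\}}=P_{y|\vec{x}}(z)$, and then minimize the resulting quadratic pointwise (the paper phrases this last step as ``completing the square''). Your discussion of the Fubini justification and the fact that the pointwise minimizer $P_{y|\vec{x}}$ is itself a valid CDF is slightly more careful than the paper's version, but the argument is the same.
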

\begin{proof}
First, combining~(\ref{eqn:crps_expected}) with the L2 representation of CRPS~\eqref{eqn:crps_theoretical} we can write:
\begin{align}
\mathbb{E}\, \rho(y, F^{-1}(q)) 
&= \mathbb{E}_{\nbeatsinput, y} \frac{1}{2}\int_\Re \left(F(z)- \mathds{1}_{\{z \geq y\}} \right)^{2}\textrm{d}z \label{eqn:crps_l2_expectation} \\
&= \mathbb{E}_{\nbeatsinput} \mathbb{E}_{y|\nbeatsinput} \frac{1}{2}\int_\Re F^2(z) - 2F(z)\mathds{1}_{\{z \geq y\}} + \mathds{1}_{\{z \geq y\}} \textrm{d}z \\
&= \mathbb{E}_{\nbeatsinput} \frac{1}{2}\int_\Re F^2(z) - 2F(z)\mathbb{E}_{y|\nbeatsinput}\mathds{1}_{\{z \geq y\}} + \mathbb{E}_{y|\nbeatsinput}\mathds{1}_{\{z \geq y\}} \textrm{d}z \\
&= \mathbb{E}_{\nbeatsinput} \frac{1}{2}\int_\Re F^2(z) - 2F(z)P_{y|\nbeatsinput}(z) + P_{y|\nbeatsinput}(z) \textrm{d}z.
\end{align}
Here we used the law of total expectation and Fubini theorem to exchange the order of integration and then used the fact that $\mathbb{E}_{y|\nbeatsinput}\mathds{1}_{\{z \geq y\}} = P_{y|\nbeatsinput}(z)$. Completing the square we further get:
\begin{align}
\mathbb{E}\, \rho(y, F^{-1}(q)) 
&= \mathbb{E}_{\nbeatsinput} \frac{1}{2}\int_\Re F^2(z) - 2F(z)P_{y|\nbeatsinput}(z) + P_{y|\nbeatsinput}(z) + P^2_{y|\nbeatsinput}(z) - P^2_{y|\nbeatsinput}(z) \textrm{d}z \\
&= \mathbb{E}_{\nbeatsinput} \frac{1}{2}\int_\Re (F(z) - P_{y|\nbeatsinput}(z))^2 + P_{y|\nbeatsinput}(z) - P^2_{y|\nbeatsinput}(z) \textrm{d}z
\end{align}
$F = P_{y|\nbeatsinput}$ is clearly the unique minimizer of the last expression since $\int_\Re (F(z) - P_{y|\nbeatsinput}(z))^2\textrm{d}z > 0, \forall F \neq P_{y|\nbeatsinput}$.
\end{proof}

\clearpage
\section{TabRegSet-101 Details} \label{sec:lprm-101}

We introduce TabRegSet-101 (Tabular Regression Set 101), a curated collection of 101 publicly available regression datasets gathered from UCI~\cite{kelly2017uci}, Kaggle~\cite{kaggle2024kaggle}, PMLB~\cite{romano2021pmlb,olson2017pmlb}, OpenML~\cite{vanschoren2013openml}, and KEEL~\cite{fdez2011keel}. These datasets span diverse domains, including \textit{Housing and Real Estate}, \textit{Energy and Efficiency}, \textit{Retail and Sales}, \textit{Computer Systems}, \textit{Physics Models} and \textit{Medicine} and exhibit different characteristics in terms of sample size and feature dimensionality (Fig.~\ref{fig:dataset_stats_appendix}). The datasets, along with their sample count, number of variables and source information are listed in Table~\ref{table:lprm101}.

\begin{figure}[t]
    \centering
    \hspace{-0.91cm}
    \includegraphics[width=1.0\linewidth]{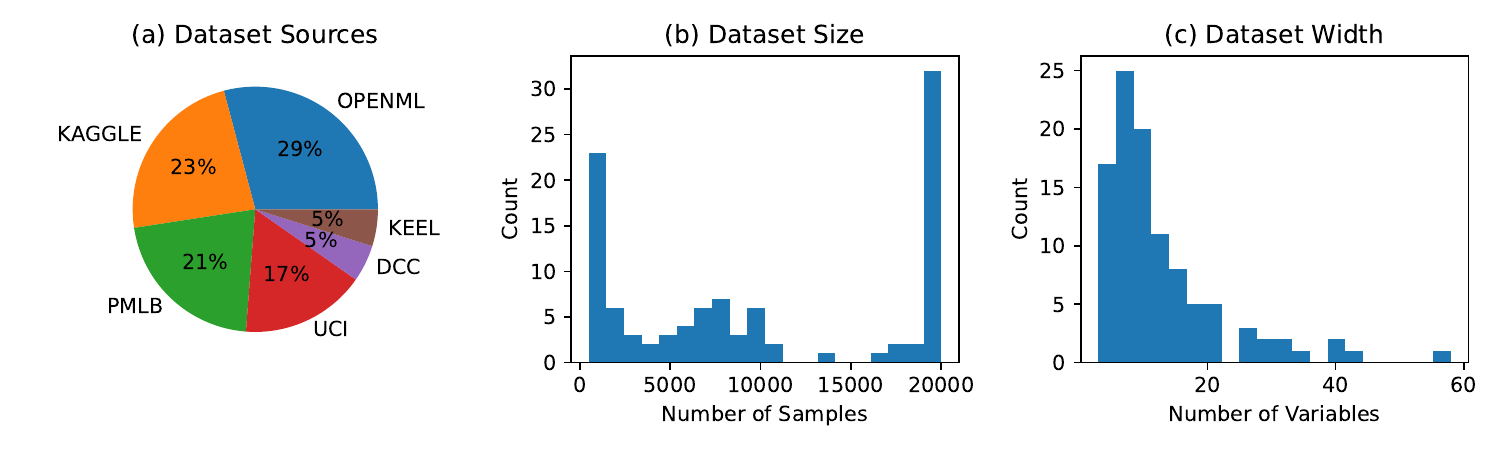}
    \caption{Statistics of the evaluation dataset: (a) distribution by source, (b) dataset sizes, and (c) feature counts.}
    \label{fig:dataset_stats_appendix}
\end{figure}

We focus specifically on the regression task in which the target variable is continuous or, if it has limited number of levels, these are ordered such as student exam scores or wine quality. The target variable in each dataset is normalized to the [0, 10] range and the independent variables are used as is, raw. The target variable scaling is applied to equalize the contributions of the evaluation metrics from each dataset. Datasets have variable number of samples, the lowest being just below 1000. For very large datasets we limit the number of samples used in our benchmark to be 20,000 by subsampling uniformly at random. This allows us (i) to model imbalance, and at the same time (ii) avoid the situation in which a few large datasets could completely dominate the training and evaluation of the model. The distribution of datasets by source, number of samples and number of variables is shown in Figure~\ref{fig:dataset_stats_appendix}.

\clearpage
\onecolumn

\newcolumntype{S}{>{\footnotesize}l}
\newcolumntype{F}[1]{>{\fontsize{8}{9}\selectfont}p{#1}}

\begin{longtable}{lSrrSF{7.6cm}}
\caption{The list of datasets comprising TabRegSet-101 benchmark} \label{table:lprm101} \\
\toprule
id & Name & n\_samples & n\_vars & source & URL \\
\midrule
\endfirsthead
\caption[]{The list of datasets comprising TabRegSet-101 benchmark} \\
\toprule
 & name & n\_samples & n\_vars & source & url \\
\midrule
\endhead
\midrule
\multicolumn{6}{r}{Continued on next page} \\
\midrule
\endfoot
\bottomrule
\endlastfoot
0 & Abalone & 4177 & 7 & uci & \url{https://archive.ics.uci.edu/static/public/1/data.csv} \\
1 & Student\_Performance & 649 & 29 & uci & \url{https://archive.ics.uci.edu/static/public/320/data.csv} \\
2 & Infrared\_Thermography\_Temperature & 1020 & 32 & uci & \url{https://archive.ics.uci.edu/static/public/925/data.csv} \\
3 & Parkinsons\_Telemonitoring & 5875 & 18 & uci & \url{https://archive.ics.uci.edu/static/public/189/data.csv} \\
4 & Energy\_Efficiency & 768 & 7 & uci & \url{https://archive.ics.uci.edu/static/public/242/data.csv} \\
5 & 1027\_ESL & 488 & 3 & pmlb & \url{https://github.com/EpistasisLab/penn-ml-benchmarks/raw/master/datasets/1027_ESL/1027_ESL.tsv.gz} \\
6 & 1028\_SWD & 1000 & 9 & pmlb & \url{https://github.com/EpistasisLab/penn-ml-benchmarks/raw/master/datasets/1028_SWD/1028_SWD.tsv.gz} \\
7 & 1029\_LEV & 1000 & 3 & pmlb & \url{https://github.com/EpistasisLab/penn-ml-benchmarks/raw/master/datasets/1029_LEV/1029_LEV.tsv.gz} \\
8 & 1030\_ERA & 1000 & 3 & pmlb & \url{https://github.com/EpistasisLab/penn-ml-benchmarks/raw/master/datasets/1030_ERA/1030_ERA.tsv.gz} \\
9 & 1199\_BNG\_echoMonths & 17496 & 8 & pmlb & \url{https://github.com/EpistasisLab/penn-ml-benchmarks/raw/master/datasets/1199_BNG_echoMonths/1199_BNG_echoMonths.tsv.gz} \\
10 & 197\_cpu\_act & 8192 & 20 & pmlb & \url{https://github.com/EpistasisLab/penn-ml-benchmarks/raw/master/datasets/197_cpu_act/197_cpu_act.tsv.gz} \\
11 & 225\_puma8NH & 8192 & 7 & pmlb & \url{https://github.com/EpistasisLab/penn-ml-benchmarks/raw/master/datasets/225_puma8NH/225_puma8NH.tsv.gz} \\
12 & 227\_cpu\_small & 8192 & 11 & pmlb & \url{https://github.com/EpistasisLab/penn-ml-benchmarks/raw/master/datasets/227_cpu_small/227_cpu_small.tsv.gz} \\
13 & 294\_satellite\_image & 6435 & 35 & pmlb & \url{https://github.com/EpistasisLab/penn-ml-benchmarks/raw/master/datasets/294_satellite_image/294_satellite_image.tsv.gz} \\
14 & 344\_mv & 20000 & 9 & pmlb & \url{https://github.com/EpistasisLab/penn-ml-benchmarks/raw/master/datasets/344_mv/344_mv.tsv.gz} \\
15 & 503\_wind & 6574 & 13 & pmlb & \url{https://github.com/EpistasisLab/penn-ml-benchmarks/raw/master/datasets/503_wind/503_wind.tsv.gz} \\
16 & 529\_pollen & 3848 & 3 & pmlb & \url{https://github.com/EpistasisLab/penn-ml-benchmarks/raw/master/datasets/529_pollen/529_pollen.tsv.gz} \\
17 & 537\_houses & 20000 & 7 & pmlb & \url{https://github.com/EpistasisLab/penn-ml-benchmarks/raw/master/datasets/537_houses/537_houses.tsv.gz} \\
18 & 547\_no2 & 500 & 6 & pmlb & \url{https://github.com/EpistasisLab/penn-ml-benchmarks/raw/master/datasets/547_no2/547_no2.tsv.gz} \\
19 & 564\_fried & 20000 & 9 & pmlb & \url{https://github.com/EpistasisLab/penn-ml-benchmarks/raw/master/datasets/564_fried/564_fried.tsv.gz} \\
20 & 595\_fri\_c0\_1000\_10 & 1000 & 9 & pmlb & \url{https://github.com/EpistasisLab/penn-ml-benchmarks/raw/master/datasets/595_fri_c0_1000_10/595_fri_c0_1000_10.tsv.gz} \\
21 & 593\_fri\_c1\_1000\_10 & 1000 & 9 & pmlb & \url{https://github.com/EpistasisLab/penn-ml-benchmarks/raw/master/datasets/593_fri_c1_1000_10/593_fri_c1_1000_10.tsv.gz} \\
22 & 1193\_BNG\_lowbwt & 20000 & 8 & pmlb & \url{https://github.com/EpistasisLab/penn-ml-benchmarks/raw/master/datasets/1193_BNG_lowbwt/1193_BNG_lowbwt.tsv.gz} \\
23 & 1201\_BNG\_breastTumor & 20000 & 8 & pmlb & \url{https://github.com/EpistasisLab/penn-ml-benchmarks/raw/master/datasets/1201_BNG_breastTumor/1201_BNG_breastTumor.tsv.gz} \\
24 & 1203\_BNG\_pwLinear & 20000 & 9 & pmlb & \url{https://github.com/EpistasisLab/penn-ml-benchmarks/raw/master/datasets/1203_BNG_pwLinear/1203_BNG_pwLinear.tsv.gz} \\
25 & 215\_2dplanes & 20000 & 9 & pmlb & \url{https://github.com/EpistasisLab/penn-ml-benchmarks/raw/master/datasets/215_2dplanes/215_2dplanes.tsv.gz} \\
26 & 218\_house\_8L & 20000 & 7 & pmlb & \url{https://github.com/EpistasisLab/penn-ml-benchmarks/raw/master/datasets/218_house_8L/218_house_8L.tsv.gz} \\
27 & QsarFishToxicity & 908 & 5 & uci & \url{https://archive.ics.uci.edu/static/public/504/qsar+fish+toxicity.zip} \\
28 & concrete\_compressive\_strength & 1030 & 7 & uci & \url{https://archive.ics.uci.edu/static/public/165/concrete+compressive+strength.zip} \\
29 & PRODUCTIVITY & 1197 & 12 & uci & \url{https://archive.ics.uci.edu/static/public/597/productivity+prediction+of+garment+employees.zip} \\
30 & CCPP & 9568 & 3 & uci & \url{https://archive.ics.uci.edu/static/public/294/combined+cycle+power+plant.zip} \\
31 & AIRFOIL & 1503 & 4 & uci & \url{https://archive.ics.uci.edu/static/public/291/airfoil+self+noise.zip} \\
32 & TETOUAN & 20000 & 6 & uci & \url{https://archive.ics.uci.edu/static/public/849/power+consumption+of+tetouan+city.zip} \\
33 & BIAS\_CORRECTION & 7725 & 22 & uci & \url{https://archive.ics.uci.edu/static/public/514/bias+correction+of+numerical+prediction+model+temperature+forecast.zip} \\
34 & APARTMENTS & 10000 & 10 & uci & \url{https://archive.ics.uci.edu/static/public/555/apartment+for+rent+classified.zip} \\
35 & MedicalCost & 1338 & 5 & kaggle & \url{kaggle datasets download -d mirichoi0218/insurance} \\
36 & Vehicle & 2059 & 18 & kaggle & \url{kaggle datasets download -d nehalbirla/vehicle-dataset-from-cardekho} \\
37 & LifeExpectancy & 2928 & 18 & kaggle & \url{kaggle datasets download -d kumarajarshi/life-expectancy-who} \\
38 & CalHousing & 20000 & 7 & dcc & \url{https://www.dcc.fc.up.pt/~ltorgo/Regression/cal_housing.tgz} \\
39 & Ailerons & 7154 & 39 & dcc & \url{https://www.dcc.fc.up.pt/~ltorgo/Regression/ailerons.tgz} \\
40 & DeltaElevators & 9517 & 5 & dcc & \url{https://www.dcc.fc.up.pt/~ltorgo/Regression/delta_elevators.tgz} \\
41 & Pole & 10000 & 25 & dcc & \url{https://www.dcc.fc.up.pt/~ltorgo/Regression/pol.tgz} \\
42 & Kinematics & 8192 & 7 & dcc & \url{https://www.dcc.fc.up.pt/~ltorgo/Regression/kinematics.tar.gz} \\
43 & BigMartSales & 8523 & 10 & kaggle & \url{kaggle datasets download -d brijbhushannanda1979/bigmart-sales-data} \\
44 & VideoGameSales & 16598 & 3 & kaggle & \url{kaggle datasets download -d gregorut/videogamesales} \\
45 & NewsPopularity & 20000 & 58 & uci & \url{https://archive.ics.uci.edu/static/public/332/online+news+popularity.zip} \\
46 & Wizmir & 1461 & 8 & keel & \url{https://sci2s.ugr.es/keel/dataset/data/regression/wizmir.zip} \\
47 & Ele2 & 1056 & 3 & keel & \url{https://sci2s.ugr.es/keel/dataset/data/regression/ele-2.zip} \\
48 & Treasury & 1049 & 14 & keel & \url{https://sci2s.ugr.es/keel/dataset/data/regression/treasury.zip} \\
49 & Mortgage & 1049 & 14 & keel & \url{https://sci2s.ugr.es/keel/dataset/data/regression/mortgage.zip} \\
50 & Laser & 993 & 3 & keel & \url{https://sci2s.ugr.es/keel/dataset/data/regression/laser.zip} \\
51 & SpaceGa & 3107 & 5 & openml & \url{https://www.openml.org/data/download/52619/space_ga.arff} \\
52 & VisualizingSoil & 8641 & 3 & openml & \url{https://www.openml.org/data/download/52988/visualizing_soil.arff} \\
53 & Diamonds & 20000 & 8 & openml & \url{https://www.openml.org/data/download/21792853/dataset.arff} \\
54 & TitanicFare & 1307 & 6 & openml & \url{https://www.openml.org/data/download/20649205/file277c5e2b70e8.arff} \\
55 & Sulfur & 10081 & 5 & openml & \url{https://www.openml.org/data/download/2095629/phpBXEqg1.arff} \\
56 & Debutanizer & 2394 & 6 & openml & \url{https://www.openml.org/data/download/2096280/phpWT77lf.arff} \\
57 & Fardamento & 6277 & 5 & openml & \url{https://www.openml.org/data/download/21854531/fardamento_saidas_19_20a20maio.arff} \\
58 & ProteinTertiary & 20000 & 8 & openml & \url{https://api.openml.org/data/download/22111827/file22f167620a212.arff} \\
59 & BrazilianHouses & 10692 & 7 & openml & \url{https://api.openml.org/data/download/22111854/file22f1627e4a960.arff} \\
60 & Cps88Wages & 20000 & 5 & openml & \url{https://api.openml.org/data/download/22111848/file22f161d4b5556.arff} \\
61 & CPMP-2015 & 2108 & 25 & openml & \url{https://www.openml.org/data/download/21377442/file16a868cf35f5.arff} \\
62 & NASA-PHM2008 & 20000 & 16 & openml & \url{https://www.openml.org/data/download/22045221/dataset.arff} \\
63 & Wind & 6574 & 12 & openml & \url{https://www.openml.org/data/download/52615/wind.arff} \\
64 & NewFuelCar & 20000 & 17 & openml & \url{https://www.openml.org/data/download/21230500/pruebaconvonline.csv.arff} \\
65 & MiamiHousing & 13932 & 14 & openml & \url{https://www.openml.org/data/download/22047757/miami2016.arff} \\
66 & BlackFriday & 20000 & 8 & openml & \url{https://www.openml.org/data/download/21230845/file639340bd9ca9.arff} \\
67 & IEEE80211aaGATS & 5296 & 28 & openml & \url{https://www.openml.org/data/download/22101884/dataset.arff} \\
68 & Yprop41 & 8885 & 41 & openml & \url{https://api.openml.org/data/download/22111920/dataset.arff} \\
69 & Sarcos & 20000 & 20 & openml & \url{https://api.openml.org/data/download/22111840/file22f166a1669bb.arff} \\
70 & ZurichDelays & 20000 & 16 & openml & \url{https://www.openml.org/data/download/21854423/file86eb92864fd.arff} \\
71 & 1000-Cameras & 1015 & 13 & openml & \url{https://www.openml.org/data/download/22102539/dataset.arff} \\
72 & GridStability & 10000 & 11 & openml & \url{https://api.openml.org/data/download/22111837/file22f1652de1c8a.arff} \\
73 & PumaDyn32nh & 8192 & 31 & openml & \url{https://api.openml.org/data/download/22111845/file22f161b261f3b.arff} \\
74 & Fifa & 19178 & 27 & openml & \url{https://api.openml.org/data/download/22111894/file10aca711933d5.arff} \\
75 & WhiteWine & 4898 & 10 & openml & \url{https://api.openml.org/data/download/22111835/file22f16150a82cd.arff} \\
76 & RedWine & 1599 & 10 & openml & \url{https://api.openml.org/data/download/22111836/file22f162b311c38.arff} \\
77 & FpsBenchmark & 20000 & 42 & openml & \url{https://api.openml.org/data/download/22111856/file22f1639d20997.arff} \\
78 & KingCountyHousing & 20000 & 20 & openml & \url{https://api.openml.org/data/download/22111853/file22f167bd414f1.arff} \\
79 & AvocadoPrices & 18249 & 12 & kaggle & \url{kaggle datasets download -d neuromusic/avocado-prices} \\
80 & Transcoding & 20000 & 18 & uci & \url{https://archive.ics.uci.edu/static/public/335/online+video+characteristics+and+transcoding+time+dataset.zip} \\
81 & house\_16H & 20000 & 15 & openml & \url{https://www.openml.org/data/download/52752/house_16H.arff} \\
82 & Sales & 10738 & 13 & openml & \url{https://www.openml.org/data/download/21756753/dataset.arff} \\
83 & WalmartSales & 6435 & 8 & kaggle & \url{kaggle datasets download -d mikhail1681/walmart-sales} \\
84 & UsedCar & 6019 & 11 & kaggle & \url{kaggle datasets download -d nitishjolly/used-car-price-prediction} \\
85 & HouseRent & 4746 & 11 & kaggle & \url{kaggle datasets download -d iamsouravbanerjee/house-rent-prediction-dataset} \\
86 & LaptopPrice & 1273 & 15 & kaggle & \url{kaggle datasets download -d ehtishamsadiq/uncleaned-laptop-price-dataset} \\
87 & UberFare & 20000 & 8 & kaggle & \url{kaggle datasets download -d yasserh/uber-fares-dataset} \\
88 & Co2Emission & 7385 & 10 & kaggle & \url{kaggle datasets download -d debajyotipodder/co2-emission-by-vehicles} \\
89 & SongPopularity & 18835 & 12 & kaggle & \url{kaggle datasets download -d yasserh/song-popularity-dataset} \\
90 & Cars & 20000 & 8 & kaggle & \url{kaggle datasets download -d aishwaryamuthukumar/cars-dataset-audi-bmw-ford-hyundai-skoda-vw} \\
91 & GemstonePrice & 20000 & 8 & kaggle & \url{kaggle datasets download -d colearninglounge/gemstone-price-prediction} \\
92 & LoanAmount & 20000 & 20 & kaggle & \url{kaggle datasets download -d phileinsophos/predict-loan-amount-data} \\
93 & SaudiArabiaCars & 5507 & 10 & kaggle & \url{kaggle datasets download -d turkibintalib/saudi-arabia-used-cars-dataset} \\
94 & GpuKernelPerformance & 20000 & 13 & kaggle & \url{kaggle datasets download -d rupals/gpu-runtime} \\
95 & AmericanHousePrices & 20000 & 10 & kaggle & \url{kaggle datasets download -d jeremylarcher/american-house-prices-and-demographics-of-top-cities} \\
96 & KindleBooks & 20000 & 12 & kaggle & \url{kaggle datasets download -d asaniczka/amazon-kindle-books-dataset-2023-130k-books} \\
97 & BookSales & 1070 & 8 & kaggle & \url{kaggle datasets download -d thedevastator/books-sales-and-ratings} \\
98 & CapitalGain & 20000 & 12 & kaggle & \url{kaggle datasets download -d minnieliang/adult-data} \\
99 & MarketingCampaign & 2976 & 14 & kaggle & \url{kaggle datasets download -d ahmadazari/marketing-campaign-data} \\
100 & CampaignUplift & 2000 & 9 & kaggle & \url{kaggle datasets download -d hwwang98/software-usage-promotion-campaign-uplift-model} \\
\end{longtable}
\clearpage
\onecolumn

\section{Supplementary Results}\label{sec:supplimentary-results}

\subsection{Domain-Specific Results}
While the main paper focuses on the House Price Prediction domain with key comparisons, here we present comprehensive results for both House Price Prediction and Energy and Efficiency domains. 

\noindent\textbf{House Price Prediction Domain:} Table~\ref{tab:house-performance-full} presents the complete performance comparison for HouseRent dataset~\cite{Banerjee_House_Rent_Prediction} (4.7K samples, 11 features).

\begin{table}[t]
\caption{Performance comparison on HouseRent dataset across point prediction accuracy (SMAPE, AAD, RMSE, BIAS) and distributional accuracy (COV@95, CRPS) metrics. Lower values are better for all metrics except COV@95, where values closer to 95 are optimal. The results with 95\% confidence intervals derived from multiple random seed runs.}
\begin{tabular}{l|cccccc}
\toprule
Model & SMAPE & AAD & RMSE & BIAS & COV@95 & CRPS \\
\midrule
XGBoost-Local & $34.1\pm 0.1$ & $0.028\pm 0.001$ & $0.066\pm 0.004$ & $-0.002\pm 0.01$ & $86.5\pm 0.2$ & $0.022\pm 0.001$ \\
XGBoost-Domain & $32.9\pm 0.2$ & $0.031\pm 0.002$ & $0.073\pm 0.005$ & $0.004\pm 0.01$ & $88.8\pm 0.2$ & $0.026\pm 0.002$ \\
XGBoost-Global & $35.4\pm 4.4$ & $0.033\pm 0.100$ & $0.075\pm 0.143$ & $0.011\pm 0.05$ & $62.5\pm 0.3$ & $0.033\pm 0.165$ \\
\midrule
LightGBM-Local & $31.4\pm 0.2$ & $0.028\pm 0.001$ & $0.068\pm 0.001$ & $-0.01\pm 0.01$ & $92.5\pm 0.7$ & $0.021\pm 0.001$ \\
LightGBM-Domain & $33.6\pm 1.3$ & $0.029\pm 0.001$ & $0.07\pm 0.001$ & $-0.01\pm 0.01$ & $91.8\pm 1.3$ & $0.024\pm 0.001$ \\
LightGBM-Global & $37.1\pm 0.2$ & $0.036\pm 0.001$ & $0.085\pm 0.003$ & $-0.01\pm 0.01$ & $94.4\pm 0.7$ & $0.028\pm 0.001$ \\
\midrule
CatBoost-Local & $31.5\pm 0.1$ & $0.028\pm 0.001$ & $0.068\pm 0.003$ & $-0.007\pm 0.01$ & $92.8\pm 0.1$ & $0.022\pm 0.001$ \\
CatBoost-Domain & $32.4\pm 0.2$ & $0.028\pm 0.002$ & $0.072\pm 0.004$ & $-0.011\pm 0.01$ & $94.1\pm 0.2$ & $0.022\pm 0.002$ \\
CatBoost-Global & $50.3\pm 0.2$ & $0.044\pm 0.006$ & $0.104\pm 0.009$ & $-0.015\pm 0.02$ & $88.4\pm 0.2$ & $0.032\pm 0.004$ \\
\midrule
Transformer-Local & $33.5\pm 0.3$ & $0.030\pm 0.008$ & $0.070\pm 0.015$ & $-0.008\pm 0.01$ & $93.6\pm 0.3$ & $0.025\pm 0.005$ \\
Transformer-Domain & $32.8\pm 0.3$ & $0.029\pm 0.008$ & $0.069\pm 0.015$ & $-0.007\pm 0.01$ & $94.0\pm 0.3$ & $0.023\pm 0.005$ \\
Transformer-Global & $32.3\pm 0.3$ & $0.028\pm 0.008$ & $0.068\pm 0.015$ & $-0.007\pm 0.01$ & $93.5\pm 0.3$ & $0.020\pm 0.005$ \\
\midrule
NIAQUE-Local & $32.0\pm 0.4$ & $0.028\pm 0.012$ & $0.067\pm 0.019$ & $-0.007\pm 0.01$ & $96.0\pm 0.4$ & $0.019\pm 0.011$ \\
NIAQUE-Domain & $30.4\pm 0.2$ & $0.026\pm 0.006$ & $0.063\pm 0.010$ & $-0.002\pm 0.01$ & $93.5\pm 0.3$ & $0.018\pm 0.006$ \\
NIAQUE-Global & $30.3\pm 0.1$ & $0.026\pm 0.002$ & $0.067\pm 0.005$ & $-0.005\pm 0.01$ & $93.3\pm 0.2$ & $0.019\pm 0.002$ \\
\bottomrule
\end{tabular}
\label{tab:house-performance-full}
\end{table}

\noindent\textbf{Energy and Efficiency Domain:} We analyze the Wind dataset~\cite{OpenML_Wind_Dataset} (6.5K samples, 12 features) as our representative case study. Similar to HouseRent, this dataset was chosen for its moderate sample size and limited feature dimensionality, providing a balanced evaluation ground. Table~\ref{tab:wind-performance} shows the performance comparison between NIAQUE and baselines across different training scenarios. The results reinforce our main findings: NIAQUE maintains consistent performance across local, domain, and global training settings (RMSE: 0.747-0.760), while traditional models like CatBoost show significant degradation in global settings (RMSE increases from 0.754 to 1.068).

\begin{table}[t]
\caption{Performance comparison on Wind dataset across point prediction accuracy (SMAPE, AAD, RMSE, BIAS) and distributional accuracy (COV@95, CRPS) metrics. Lower values are better for all metrics except COV@95, where values closer to 95 are optimal. The results with 95\% confidence intervals derived from multiple random seed runs.}
\begin{tabular}{l|cccccc}
\toprule
Model & SMAPE & AAD & RMSE & BIAS & COV@95 & CRPS \\
\midrule
XGBoost-Local & $19.4\pm 0.1$ & $0.607\pm 0.001$ & $0.779\pm 0.004$ & $0.044\pm 0.01$ & $64.9\pm 0.2$ & $0.473\pm 0.001$ \\
XGBoost-Domain & $19.3\pm 0.2$ & $0.603\pm 0.002$ & $0.779\pm 0.005$ & $0.016\pm 0.01$ & $96.0\pm 0.2$ & $0.462\pm 0.002$ \\
XGBoost-Global & $20.0\pm 4.4$ & $0.627\pm 0.100$ & $0.811\pm 0.143$ & $0.010\pm 0.05$ & $97.0\pm 0.3$ & $0.527\pm 0.165$ \\
\midrule
LightGBM-Local & $18.8\pm 0.2$ & $0.589\pm 0.01$ & $0.753\pm 0.007$ & $0.04\pm 0.01$ & $91.5\pm 1.0$ & $0.429\pm 0.003$ \\
LightGBM-Domain & $18.9\pm 0.1$ & $0.592\pm 0.004$ & $0.766\pm 0.006$ & $0.05\pm 0.01$ & $93.8\pm 0.6$ & $0.446\pm 0.002$ \\
LightGBM-Global & $20.8\pm 0.1$ & $0.657\pm 0.002$ & $0.854\pm 0.001$ & $0.02\pm 0.01$ & $97.8\pm 0.3$ & $0.599\pm 0.032$ \\
\midrule
CatBoost-Local & $18.9\pm 0.1$ & $0.590\pm 0.001$ & $0.754\pm 0.003$ & $0.046\pm 0.01$ & $93.6\pm 0.1$ & $0.424\pm 0.001$ \\
CatBoost-Domain & $19.8\pm 0.2$ & $0.617\pm 0.002$ & $0.796\pm 0.004$ & $0.038\pm 0.01$ & $94.8\pm 0.2$ & $0.454\pm 0.002$ \\
CatBoost-Global & $26.3\pm 0.2$ & $0.832\pm 0.006$ & $1.068\pm 0.009$ & $0.027\pm 0.02$ & $98.3\pm 0.2$ & $0.722\pm 0.004$ \\
\midrule
Transformer-Local & $18.5\pm 0.3$ & $0.575\pm 0.008$ & $0.740\pm 0.015$ & $0.025\pm 0.01$ & $94.2\pm 0.3$ & $0.410\pm 0.005$ \\
Transformer-Domain & $18.5\pm 0.3$ & $0.573\pm 0.008$ & $0.736\pm 0.015$ & $0.022\pm 0.01$ & $94.3\pm 0.3$ & $0.405\pm 0.005$ \\
Transformer-Global & $18.4\pm 0.3$ & $0.572\pm 0.008$ & $0.733\pm 0.015$ & $0.020\pm 0.01$ & $94.5\pm 0.3$ & $0.401\pm 0.005$ \\
\midrule
NIAQUE-Local & $18.8\pm 0.4$ & $0.582\pm 0.012$ & $0.747\pm 0.019$ & $0.045\pm 0.01$ & $95.7\pm 0.4$ & $0.407\pm 0.011$ \\
NIAQUE-Domain & $18.7\pm 0.2$ & $0.585\pm 0.006$ & $0.760\pm 0.010$ & $0.027\pm 0.01$ & $95.4\pm 0.3$ & $0.415\pm 0.006$ \\
NIAQUE-Global & $19.0\pm 0.1$ & $0.587\pm 0.002$ & $0.755\pm 0.005$ & $0.031\pm 0.01$ & $93.0\pm 0.2$ & $0.412\pm 0.002$ \\
\bottomrule
\end{tabular}
\label{tab:wind-performance}
\end{table}

Notably, NIAQUE's performance in the Energy and Efficiency domain exhibits similar patterns to those observed in the House Price Prediction domain. The model maintains reliable uncertainty quantification (coverage near 95\%) and demonstrates effective knowledge transfer in domain-specific training, achieving comparable or better performance than local training. These results further support our conclusion about NIAQUE's ability to effectively leverage cross-dataset learning while maintaining robust performance across different domains.

\subsection{Adaptation to New Tasks: Kaggle Competitions} \label{sec:kaggle-competitions}

\begin{table}[t]
\centering
\caption{Performance comparison on Kaggle competition dataset. Lower values are better. 
Baseline results are adopted from publicly shared notebooks and discussion forums in the competition~\cite{playground-series-s4e4}. Rank represents the position of the various methods on private leaderboard.}
\begin{tabular}{lccc}
\toprule
Model & Feature Engineering & RMLSE & Rank \\
\midrule
XGBoost~\cite{xgboost-tabnet-abalone} & None & 0.15019 & 1615 \\
LightGBM~\cite{lightgbm-abalone} & None & 0.14914 & 1464 \\
CatBoost~\cite{catboost-abalone} & None & 0.14783 & 1064 \\
TabNet~\cite{xgboost-tabnet-abalone} & None & 0.15481 & 2047 \\
TabDPT & None & 0.15026 & 1623 \\
TabDPT & OpenFE & 0.14751 & 924 \\
TabPFN & None & 0.15732 & 2132 \\
TabPFN & OpenFE & 0.14922 & 1478 \\
NIAQUE-Scratch & None & 0.15047 & 1646\\
NIAQUE-Pretrain-100 & None & 0.14823 & 1232\\
NIAQUE-Pretrain-full & None & 0.14808 & 1178 \\
NIAQUE-Pretrain-full & OpenFE & 0.14556 & 304 \\
NIAQUE-Ensemble & OpenFE & 0.14423 & 8 \\
Winner~\cite{first-place-abalone} & Manual & 0.14374 & 1 \\
\bottomrule
\end{tabular}
\label{tab:kaggle-competition}
\end{table}

\subsubsection{Abalone Competition} \label{sec:kaggle-competitions-abalone}

To validate NIAQUE's practical effectiveness, we evaluate its performance in recent Kaggle competitions. Abalone~\cite{playground-series-s4e4}, focuses on the Abalone age prediction task. This competition, with 2,700 participants and over 20,000 submissions, provides an excellent real-world benchmark, particularly as neural networks are widely reported to underperform in it, compared to traditional boosted tree methods.

\noindent\textbf{Methodology:} Our approach involves two stages: pretraining and fine-tuning. First, we pretrain NIAQUE on TabRegSet-101 (which includes the original UCI Abalone dataset~\cite{Dua2019Abalone}) using our quantile loss framework. Then, we fine-tune the pretrained model on the competition's training data, optimizing for RMSLE metric as defined in Section~\ref{sec:prelim}. The dataset size is $90,614$ training samples and $60,410$ test samples~\cite{playground-series-s4e4}. To systematically evaluate the impact of transfer learning, we implement three variants: NIAQUE-Scratch (trained only on competition data, no pretraining), NIAQUE-Pretrain-100 (pretrained on TabRegSet-101, excluding Abalone dataset), and NIAQUE-Pretrain-full (pretrained on full TabRegSet-101). Note that the competition dataset is synthetic, generated using a deep learning model trained on the original Abalone dataset, and the competition explicitly encourages the use of the original dataset. Finally, we measure the effect of automated feature engineering on our model using OpenFE~\cite{zhang2023openfe}, an automated feature engineering framework, and explore ensemble strategies leveraging our model's probabilistic nature (NIAQUE-Ensemble).

\noindent\textbf{Results and Analysis:} 
The private leaderboard competition results along with their ranks are presented in Table~\ref{tab:kaggle-competition}, highlighting NIAQUE's capabilities against a wide range of well-established representative approaches. The effectiveness of transfer learning is evident in the performance progression: NIAQUE-Scratch (RMSLE: 0.15047), trained only on competition data, performs similarly to traditional baselines like XGBoost (0.15019). NIAQUE-Pretrain-100 (0.14823), pretrained on TabRegSet-101 excluding the Abalone dataset, shows significant improvement, demonstrating that knowledge from unrelated regression tasks can enhance performance. NIAQUE-Pretrain-full (0.14808), leveraging the complete TabRegSet-101, further improves performance by incorporating original Abalone information in the training mix and matching CatBoost (0.14783) without any feature engineering. The addition of automated feature engineering (OpenFE) into the competition dataset further improves NIAQUE's performance to 0.14556, significantly outperforming all baseline models and approaching the competition's winning score. Our best result comes from NIAQUE-Ensemble (RMSLE: 0.14423), which leverages the probabilistic nature of our model by combining predictions from different quantiles and model variants (Scratch and Pretrain-full). The ensemble benefits from quantile predictions (in addition to the median) that help correct for data imbalance, predicting higher values where models might underestimate and vice versa, achieving the 8th position on the private leaderboard, remarkably close to the winning score of 0.14374. 
This result is attained without extensive manual intervention---eschewing hand-engineered features in favor of transfer learning, standard ensemble techniques and automated feature extraction. Moreover, the model's probabilistic formulation is leveraged to further enhance point prediction accuracy, consistent with Bayesian estimation theory, where optimal estimators are typically derived as functions of the posterior distribution (e.g., the variance-optimal estimator is the posterior mean).


These results are particularly significant given that neural networks were generally considered ineffective for this task. Multiple participants reported neural approaches failing to achieve scores better than 0.15, with the competition's winner noting: \emph{``I tried different neural network architectures only to observe that none of them is competitive''}~\cite{first-place-abalone}, echoed by other participants: \emph{``yes, I also tried different neural network architectures only to observe that they could not reach beyond .15xx''}~\cite{first-place-abalone}. This real-world validation highlights NIAQUE's competitiveness against heavily engineered solutions and underscores the efficacy of integrating transfer learning with probabilistic modeling. Whereas the winning solution relied on an ensemble of 49 models with extensive manual tuning, our approach achieves comparable performance through principled transfer learning and uncertainty quantification, maintaining interpretability and necessitating minimal task-specific modifications.

\begin{table}[t]
\centering
\caption{Performance comparison on Flood Prediction competition. Higher R2-scores are better. Baseline results are from competition forums~\cite{playground-series-s4e5}. Rank represents the position on private leaderboard.}
\begin{tabular}{lcc}
\toprule
Model & R2 Score & Rank \\ 
\midrule
XGBoost~\cite{xgboost-tabnet-abalone, xgboost-floodpredict} & 0.842 & 2304 \\ 
LightGBM~\cite{lightgbm-abalone, lightgbm-floodpredict} & 0.766 & 2557 \\ 
CatBoost~\cite{catboost-abalone, catboost-floodpredict} & 0.845 & 1700 \\ 
TabNet~\cite{xgboost-tabnet-abalone} & 0.842 & 2304 \\ 
TabDPT~\cite{} & 0.804 & 2529 \\ 
TabPFN~\cite{} & 0.431 & - \\ 
NIAQUE-Scratch & 0.865 & 1099 \\ 
NIAQUE-Pretrain & 0.867 & 935 \\ 
Winner~\citep{first-place-abalone,first-place-flood-prediction} & 0.869 & 1 \\ 
\bottomrule
\end{tabular}
\label{tab:kaggle-flood}
\end{table}

\subsubsection{FloodPrediction Competition} \label{sec:kaggle-competitions-flood}

The flood prediction competition~\cite{playground-series-s4e5} presents a challenging regression task aimed at predicting flood event probabilities based on environmental features including MonsoonIntensity, TopographyDrainage, RiverManagement, and Deforestation. With 2,932 participants, this competition addresses a critical real-world problem where labeled data consists of flood probabilities rather than binary outcomes, making regression models more suitable than classification approaches.

\noindent\textbf{Methodology:} We adapted NIAQUE for this flood probability prediction task while maintaining its core probabilistic framework. The competition dataset comprises 1.12M training samples and 745K test samples, each containing relevant features derived from Flood Prediction Factors~\cite{flood-dataset}. Similar to our Abalone experiment, we evaluate transfer learning effectiveness through two variants: NIAQUE-Scratch (trained directly on competition data) and NIAQUE-Pretrain (pretrained on TabRegSet-101). The model is pretrained using our quantile loss framework and then fine-tuned with MSE Loss on the competition dataset. Notably, we found that automated feature engineering not only failed to improve performance but also significantly increased the feature space dimensionality, making several baseline methods (TabDPT and TabPFN) computationally intractable. Therefore, we focus on comparing the fundamental algorithmic capabilities without feature engineering enhancements.

\noindent\textbf{Results and Analysis:} The competition results (Table~\ref{tab:kaggle-flood}) demonstrate NIAQUE's strong performance in flood prediction. NIAQUE-Scratch achieves an R2-score of 0.865, significantly outperforming both traditional methods like XGBoost (0.842), CatBoost (0.845), and LightGBM (0.766), as well as newer deep learning approaches such as TabNet (0.842), TabDPT (0.804), and TabPFN (0.431). NIAQUE-Pretrain further improves performance to 0.867, approaching the winning score of 0.869, and securing rank 935 on the private leaderboard.

These results are particularly noteworthy given the scale and complexity of the dataset. Unlike the Abalone competition, where feature engineering played a crucial role, this competition highlights NIAQUE's ability to learn effective representations directly from raw features. The improvement from pretraining, though modest in absolute terms (0.002 increase in R2-score), represents a significant advancement in ranking (from 1099 to 935), demonstrating the value of transfer learning even in specialized environmental prediction tasks. The strong performance of our base model without extensive modifications or feature engineering suggests that NIAQUE's probabilistic framework and architecture are well-suited for large-scale regression tasks where the target variable represents underlying probabilities.

The relatively poor performance of some specialized tabular models (particularly TabPFN with R2-score of 0.431) on this large-scale dataset underscores the importance of scalability in real-world applications. NIAQUE maintains its computational efficiency while handling over a million samples, making it practical for deployment in real-world flood prediction systems where both accuracy and computational resources are critical considerations.


\subsection{Statistical Significance Analysis}\label{sec:accuracy-with-confidence-intervals}

To save space, we present benchmarking results with confidence intervals here. All confidence intervals are obtained by aggregating the evaluation results over 4 runs with different random seeds.

\begin{table}[hb]
    \centering
    \caption{Performance comparison across all metrics, with point prediction accuracy (SMAPE, AAD, RMSE, BIAS) and distributional accuracy (COV@95, CRPS). Lower values are better for all metrics except COV@95, where values closer to 95 are optimal. The results with 95\% confidence intervals derived from 4 random seed runs.}
    \renewcommand{\arraystretch}{1.2}
    \begin{tabular}{l|rrrrrr}
        \toprule
         & SMAPE & AAD & RMSE & BIAS & COV@95 & CRPS \\ 
        \midrule
        XGBoost-global & $31.4\pm 4.4$ & $0.574\pm 0.100$ & $1.056\pm 0.143$ & $-0.15\pm 0.05$ & $94.6\pm 0.3$ & $0.636\pm 0.165$ \\ 
        XGBoost-local & $25.6\pm 0.1$ & $0.433\pm 0.001$ & $0.883\pm 0.004$ & $-0.03\pm 0.01$ & $90.8\pm 0.2$ & $0.334\pm 0.001$ \\ 
        \midrule
        LightGBM-global & $27.5\pm 0.1$ & $0.475\pm 0.001$ & $0.930\pm 0.003$ & $-0.06\pm 0.01$ & $94.8\pm 0.1$ & $0.426\pm 0.017$ \\ 
        LightGBM-local & $25.7\pm 0.1$ & $0.427\pm 0.003$ & $0.865\pm 0.012$ & $-0.03\pm 0.01$ & $91.5\pm 0.2$ & $0.327\pm 0.001$ \\
        \midrule
        CATBOOST-global & $31.3\pm 0.2$ & $0.561\pm 0.006$ & $1.030\pm 0.009$ & $-0.12\pm 0.02$ & $94.9\pm 0.2$ & $0.443\pm 0.004$ \\ 
        CATBOOST-local & $24.3\pm 0.1$ & $0.408\pm 0.001$ & $0.840\pm 0.003$ & $-0.03\pm 0.01$ & $92.7\pm 0.1$ & $0.315\pm 0.001$ \\
        \midrule
        Transformer-global & $23.1\pm 0.3$ & $0.383\pm 0.008$ & $0.806\pm 0.015$ & $-0.01\pm 0.01$ & $94.6\pm 0.3$ & $0.272\pm 0.005$ \\ 
        NIAQUE-local & $22.8\pm 0.4$ & $0.377\pm 0.012$ & $0.797\pm 0.019$ & $-0.03\pm 0.01$ & $94.9\pm 0.4$ & $0.267\pm 0.011$ \\ 
        NIAQUE-global & $22.1\pm 0.1$ & $0.367\pm 0.002$ & $0.787\pm 0.005$ & $-0.02\pm 0.01$ & $94.6\pm 0.2$ & $0.261\pm 0.002$ \\ 
        \bottomrule
    \end{tabular}
    \label{tab:performance-full}
\end{table}

\clearpage

\section{Scalability Analysis of TabPFN and TabDPT} \label{appendix:tabpfn-tabdpt-baselines}

TabPFN and TabDPT represent important milestones in tabular deep learning, demonstrating competitive performance on small-scale datasets. However, our experiments reveal that both methods face significant scalability bottlenecks, limiting their utility in large-scale, real-world settings. Below, we outline these constraints in detail and motivate the need for scalable alternatives such as NIAQUE.

\subsection{Limitations of TabPFN}

Despite TabPFN’s impressive few-shot performance, its applicability is bounded by architectural and implementation constraints, as documented in the official repository~\cite{tabpfn-github}:
\begin{itemize}
    \item Maximum support for 10,000 training samples
    \item Limit of 500 features per instance
    \item Assumes all features are numerical, requiring preprocessing for categorical inputs
\end{itemize}

To adapt TabPFN for our use cases, we employed several workarounds:
\begin{itemize}
    \item For large datasets (e.g., the flood prediction dataset with 1.12M samples), we applied Random Forest-based subsampling, following official guidelines~\cite{tabpfn-github}.
    \item Categorical features were ordinally encoded to conform to TabPFN’s numerical input requirement.
\end{itemize}

These constraints, especially the aggressive downsampling, likely contributed to TabPFN’s suboptimal performance on large-scale tasks (e.g., R2 score of 0.431 on flood prediction).

\subsection{Scalability Challenges in TabDPT}

TabDPT, based on a transformer backbone, encounters scalability issues typical of attention-based models:
\begin{itemize}
    \item Memory consumption grows quadratically with context size due to self-attention
    \item Inference time scales poorly with dataset size
    \item Performance is sensitive to reductions in context size, making trade-offs between scalability and accuracy non-trivial
\end{itemize}

In our experiments:
\begin{itemize}
    \item We had to reduce the context window for the flood prediction dataset to fit within memory constraints. We reduce the context size from 8,192 in powers of 2. Context size of 1024 finally works.
    \item This reduction correlated with a decline in model performance (R2 score dropped to 0.804).
    \item The compute cost of running extensive hyperparameter tuning on large datasets proved impractical. TabDPT takes 12 hours of inference time for each hyperparameter configuration for flood prediction datatset on a V100 GPU.
\end{itemize}

These findings underscore the scalability limitations of current state-of-the-art tabular methods. While TabPFN and TabDPT perform well on smaller datasets, their architectures do not generalize efficiently to high-volume, high-dimensional data. In contrast, our proposed method, NIAQUE, is designed for scalability without sacrificing predictive accuracy. It maintains consistent performance across varying dataset sizes and avoids the need for extensive preprocessing, subsampling, or memory-intensive components. This makes it better suited for practical deployment in large-scale tabular learning settings.

\section{XGBoost Baseline} \label{sec:xgboost-baseline}


\begin{table}[t]
    \centering
    \caption{Ablation study of the XGBoost model.}
    \renewcommand{\arraystretch}{1.2}
    \begin{tabular}{rrr|rrrr|rr}
        \toprule
         type & \makecell{max \\ depth} & \makecell{learning \\rate} & $\smape$ & $\aad$  & $\bias$ & $\rmse$ & CRPS & \makecell{COVERAGE \\ @ 95} \\ 
        \midrule
        global & 8 & 0.02 & 31.4 & 0.574 & -0.15 & 1.056 & 0.636 & 94.6
        \\
        global & 16 & 0.02 & 25.7 & 0.441 & -0.07 & 0.864 & 0.484 & 91.5
        \\ 
        global & 32 & 0.02 & 24.1 & 0.402 & -0.05 & 0.800 & 0.353 & 80.0
        \\ 
        global & 40 & 0.02 & 24.6 & 0.414 & -0.05 & 0.815 & 0.378 & 78.2
        \\ 
        global & 48 & 0.02 & 24.1 & 0.397 & -0.04 & 0.785 & 0.362 & 74.8
        \\ 
        global & 96 & 0.02 & 23.8 & 0.384 & -0.03 & 0.769 & 0.346 & 64.9
        \\
        local & 16 & 0.02 & 23.0 & 0.367 & -0.00 & 0.753 & 0.317 & 52.0 \\ 
        local & 12 & 0.02 & 22.7 & 0.369 & -0.01 & 0.756 & 0.304 & 66.0
        \\
        local & 8 & 0.02 & 22.4 & 0.372 & -0.02 & 0.773 & 0.294 & 82.3 \\ 
        local & 8 & 0.05 & 22.5 & 0.373 & -0.02 & 0.773 & 0.291 & 82.4 \\ 
        local & 6 & 0.02 & 22.7 & 0.382 & -0.02 & 0.795 & 0.298 & 87.3
        \\
        local & 4 & 0.02 & 24.1 & 0.412 & -0.03 & 0.847 & 0.318 & 90.2
        \\
        local & 3 & 0.02 & 25.6 & 0.433 & -0.03 & 0.883 & 0.334 & 90.8 
        \\
        \bottomrule
    \end{tabular}
    \label{tab:xgboost_ablation}
\end{table}

        
        

\clearpage
\section{CATBoost Baseline} \label{sec:catboost-baseline}

The CATBoost is trained using the standard package via \verb|pip install catboost| using \verb|grow_policy = Depthwise|. The explored hyper-parqameter grid appears in~\Cref{tab:catboost_ablation}.

\Cref{tab:catboost_num_quantiles} shows CATBoost accuracy as a function of the number of quantiles. Quantiles are generated using linspace grid \verb|np.linspace(0.01, 0.99, num_quantiles)|. We recover the best overall result for the case of 3 quantiles, and increasing the number of quantiles leads to quickly deteriorating metrics. It appears that CATBoost is unfit to solve complex multi-quantile problems.

\begin{table}[t]
    \centering
    \caption{Ablation study of the CATBoost model.}
    \renewcommand{\arraystretch}{1.2}
    \begin{tabular}{rrr|rrrr|rr}
        \toprule
         type & depth & \makecell{min data \\ in leaf} & $\smape$ & $\aad$  & $\bias$ & $\rmse$ & CRPS & \makecell{COVERAGE \\ @ 95} \\ 
        \midrule
        global & 16 & 50 &  31.4 & 0.565 & -0.12 & 1.036 & 0.442 & 94.2
        \\ 
        global & 16 & 100 & 31.3 & 0.561 & -0.12 & 1.030 & 0.443 & 94.9
        \\ 
        global & 16 & 200 & 31.6 & 0.569 & -0.13 & 1.041 & 0.445 & 94.2
        \\
        global & 8 & 100 & 41.1 & 0.785 & -0.26 & 1.324 & 0.602 & 94.3
        \\ 
        
        local & 3 & 50 & 24.3 & 0.409 & -0.03 & 0.841 & 0.316 & 92.7
        \\ 
        local & 3 & 100 & 24.3 & 0.407 & -0.03 & 0.843 & 0.317 & 92.7
        \\ 
        local & 3 & 200 & 24.3 & 0.408 & -0.03 & 0.840 & 0.315 & 92.7
        \\ 

        local & 5 & 50  & 22.2 & 0.373 & -0.02 & 0.785 & 0.285 & 90.7
        \\ 
        local & 5 & 100  & 22.3 & 0.374 & -0.02 & 0.786 & 0.285 & 91.3
        \\ 
        local & 5 & 200  & 22.4 & 0.378 & -0.02 & 0.791 & 0.288 & 91.6
        \\ 
        
         local & 7 & 50  & 21.5 & 0.359 & -0.02 & 0.761 & 0.272 & 87.2
        \\
         local & 7 & 100  & 21.6 & 0.362 & -0.02 & 0.765 & 0.273 & 88.6
        \\
         local & 7 & 200  & 21.8 & 0.366 & -0.02 & 0.772 & 0.277 & 89.9
        \\

        \bottomrule
    \end{tabular}
    \label{tab:catboost_ablation}
\end{table}

\begin{table}[t]
    \centering
    \caption{CATBoost accuracy as a function of the number of quantiles.}
    \renewcommand{\arraystretch}{1.2}
    \begin{tabular}{cccc|rrrr|rr}
        \toprule
         type & depth & \makecell{min data \\ in leaf} & \makecell{num \\ quantiles} & $\smape$ & $\aad$  & $\bias$ & $\rmse$ & CRPS & \makecell{COVERAGE \\ @ 95} \\ 
        \midrule

        global & 16 & 100 & 3 & 31.3 & 0.561 & -0.12 & 1.030 & 0.443 & 94.9
        \\
        global & 16 & 100 & 5 & 35.0 & 0.665 & -0.13 & 1.183 & 0.482 & 96.2
        \\
        global & 16 & 100 & 7 & 38.5 & 0.746 & -0.18 & 1.265 & 0.533 & 96.2
        \\
        global & 16 & 100 & 9 & 43.7 & 0.879 & -0.25 & 1.437 & 0.622 & 96.2
        \\
        global & 16 & 100 & 51 &  68.9 & 1.538 & -0.53 & 2.132 & 1.036 & 95.5
        \\
        \midrule
        local & 7 & 100 & 3  &  21.5 & 0.359 & -0.02 & 0.761 & 0.272 & 87.2
        \\
        local & 7 & 100 & 9 &  23.9 & 0.399 & -0.03 & 0.823 & 0.284 & 92.4
        \\
        local & 7 & 100 & 51 & 30.3 & 0.525 & -0.09 & 1.079 & 0.369 & 92.1
        \\
        local & 16 & 100 & 51 & 30.2 & 0.514 & -0.09 & 1.055 & 0.362 & 92.4
        \\
        \bottomrule
    \end{tabular}
    \label{tab:catboost_num_quantiles}
\end{table}

\clearpage
\section{LightGBM Baseline} \label{sec:lightgbm-baseline}

\begin{table}[t]
    \centering
    \caption{Ablation study of the LightGBM model.}
    \renewcommand{\arraystretch}{1.2}
    \begin{tabular}{rrrr|rrrr|rr}
        \toprule
         type & max\_depth & \makecell{num \\ leaves} & \makecell{learning \\ rate} & $\smape$ &  $\aad$  & $\bias$ & $\rmse$ & CRPS & \makecell{COVERAGE \\ @ 95} \\ 
        \midrule
        global & -1 & 10 & 0.05  & 35.6 & 0.661 & -0.17 & 1.199 & 0.804 & 95.2
        \\ 
        global & -1 & 20 & 0.05 & 30.9 & 0.554 & -0.11 & 1.034 & 0.566 & 95.4
        \\ 
        global & -1 & 40 & 0.05 & 27.5 & 0.475 & -0.06 & 0.930 & 0.426 & 94.8
        \\ 
         global & -1 & 100 & 0.05 & 24.6 & 0.417 & -0.03 & 0.852 & 0.342 & 93.3
        \\ 
        global & -1 & 200 & 0.05  & 23.4 & 0.393 & -0.02 & 0.813 & 0.32 & 92.3
        \\ 
        global & -1 & 400 & 0.05 & 23.6 & 0.379 & -0.02 & 0.786 & 0.305 & 90.9
        \\
        global & 3 & 10 & 0.05  & 50.7 & 1.084 & -0.49 & 1.763 & 1.013 & 94.1
        \\ 
        global & 3 & 20 & 0.05 & 50.7 & 1.084 & -0.49 & 1.763 & 1.013 & 94.1
        \\ 
        global & 3 & 40 & 0.05 & 50.7 & 1.084 & -0.49 & 1.763 & 1.013 & 94.1
        \\ 
         global & 3 & 100 & 0.05 & 50.7 & 1.084 & -0.49 & 1.763 & 1.013 & 94.1
        \\ 
        global & 3 & 200 & 0.05  & 50.7 & 1.084 & -0.49 & 1.763 & 1.013 & 94.1
        \\ 
        global & 3 & 400 & 0.05 & 50.7 & 1.084 & -0.49 & 1.763 & 1.013 & 94.1
        \\
        global & 5 & 10 & 0.05  & 39.1 & 0.768 & -0.25 & 1.341 & 0.856 & 94.8
        \\ 
        global & 5 & 20 & 0.05 & 39.0 & 0.76 & -0.26 & 1.327 & 0.863 & 94.8
        \\ 
        global & 5 & 40 & 0.05 & 39.0 & 0.759 & -0.26 & 1.328 & 0.864 & 94.8
        \\ 
         global & 5 & 100 & 0.05 & 39.0 & 0.759 & -0.26 & 1.328 & 0.864 & 94.8
        \\ 
        global & 5 & 200 & 0.05  & 39.0 & 0.759 & -0.26 & 1.328 & 0.864 & 94.8
        \\ 
        global & 5 & 400 & 0.05 & 39.0 & 0.759 & -0.26 & 1.328 & 0.864 & 94.8
        \\
        global & 10 & 10 & 0.05  & 35.6 & 0.661 & -0.17 & 1.199 & 0.804 & 95.2
        \\ 
        global & 10 & 20 & 0.05 & 31.5 & 0.572 & -0.14 & 1.054 & 0.59 & 95.4
        \\ 
        global & 10 & 40 & 0.05 & 29.8 & 0.537 & -0.13 & 1.001 & 0.575 & 95.2
        \\ 
         global & 10 & 100 & 0.05 & 29.5 & 0.528 & -0.12 & 0.991 & 0.577 & 95.2
        \\ 
        global & 10 & 200 & 0.05  & 29.2 & 0.522 & -0.12 & 0.981 & 0.576 & 95.0
        \\ 
        global & 10 & 400 & 0.05 & 29.1 & 0.52 & -0.12 & 0.975 & 0.582 & 95.1
        \\
        global & 20 & 10 & 0.05  & 35.6 & 0.661 & -0.17 & 1.199 & 0.804 & 95.2
        \\ 
        global & 20 & 20 & 0.05 & 30.9 & 0.554 & -0.11 & 1.034 & 0.566 & 95.4
        \\ 
        global & 20 & 40 & 0.05 & 27.1 & 0.468 & -0.07 & 0.913 & 0.512 & 95.2
        \\ 
         global & 20 & 100 & 0.05 & 25.5 & 0.435 & -0.06 & 0.864 & 0.496 & 94.9
        \\ 
        global & 20 & 200 & 0.05  & 25.0 & 0.424 & -0.06 & 0.846 & 0.488 & 94.3
        \\ 
        global & 20 & 400 & 0.05 & 24.3 & 0.41 & -0.05 & 0.823 & 0.482 & 93.6
        \\
        global & 40 & 10 & 0.05  & 35.6 & 0.661 & -0.17 & 1.199 & 0.804 & 95.2
        \\ 
        global & 40 & 20 & 0.05 & 30.9 & 0.554 & -0.11 & 1.034 & 0.566 & 95.4
        \\ 
        global & 40 & 40 & 0.05 & 27.8 & 0.481 & -0.05 & 0.913 & 0.431 & 94.7
        \\ 
         global & 40 & 100 & 0.05 & 24.7 & 0.419 & -0.04 & 0.848 & 0.348 & 93.5
        \\ 
        global & 40 & 200 & 0.05  & 23.5 & 0.395 & -0.03 & 0.811 & 0.332 & 92.7
        \\ 
        global & 40 & 400 & 0.05 & 23.2 & 0.383 & -0.03 & 0.791 & 0.322 & 92.0
        \\
        \\
        \bottomrule
    \end{tabular}
    \label{tab:lightgbm_ablation}
\end{table}

\begin{table}[t]
    \centering
    \caption{Ablation study of the LightGBM model.}
    \renewcommand{\arraystretch}{1.2}
    \begin{tabular}{rrrr|rrrr|rr}
        \toprule
         type & max\_depth & \makecell{num \\ leaves} & \makecell{learning \\ rate} & $\smape$ &  $\aad$  & $\bias$ & $\rmse$ & CRPS & \makecell{COVERAGE \\ @ 95} \\ 
        \midrule

        local & -1 & 5 & 0.05 & 23.8 & 0.399 & -0.03 & 0.823 & 0.319 & 90.6
        \\
        local & -1 & 10 & 0.05 & 22.5 & 0.376 & -0.02 & 0.786 & 0.301 & 88.9
        \\
         local & -1 & 20 & 0.05 & 21.9 & 0.364 & -0.02 & 0.766 & 0.289 & 86.5
         \\
         local & -1 & 50 & 0.05 & 21.6 & 0.355 & -0.01 & 0.752 & 0.278 & 82.6
        \\
        local & 2 & 5 & 0.05 & 25.7 & 0.427 & -0.03 & 0.865 & 0.327 & 91.5
        \\
        local & 2 & 10 & 0.05 & 25.7 & 0.427 & -0.03 & 0.865 & 0.327 & 91.5
        \\
         local & 2 & 20 & 0.05 & 25.7 & 0.427 & -0.03 & 0.865 & 0.327 & 91.5
         \\
         local & 2 & 50 & 0.05 & 25.7 & 0.427 & -0.03 & 0.865 & 0.327 & 91.5
        \\
        local & 3 & 5 & 0.05 & 24.3 & 0.404 & -0.03 & 0.83 & 0.318 & 90.7
        \\
        local & 3 & 10 & 0.05 & 23.9 & 0.396 & -0.03 & 0.818 & 0.304 & 90.4
        \\ 
         local & 3 & 20 & 0.05 & 23.9 & 0.396 & -0.03 & 0.818 & 0.304 & 90.4
         \\ 
         local & 3 & 50 & 0.05 & 23.9 & 0.396 & -0.03 & 0.818 & 0.304 & 90.4
        \\ 
        local & 5 & 5 & 0.05 & 23.8 & 0.399 & -0.03 & 0.823 & 0.319 & 90.6
        \\
        local & 5 & 10 & 0.05 & 22.7 & 0.379 & -0.02 & 0.79 & 0.3 & 89.1
        \\ 
         local & 5 & 20 & 0.05 & 22.3 & 0.37 & -0.02 & 0.776 & 0.287 & 87.6
         \\ 
         local & 5 & 50 & 0.05 & 22.2 & 0.368 & -0.02 & 0.773 & 0.285 & 87.4
        \\ 
        \bottomrule
    \end{tabular}
    \label{tab:lightgbm_ablation_local}
\end{table}

\clearpage
\section{Transformer Baseline} \label{sec:transformer-baseline}

\begin{figure}[t]
    \centering
    \includegraphics[width=0.8\linewidth]{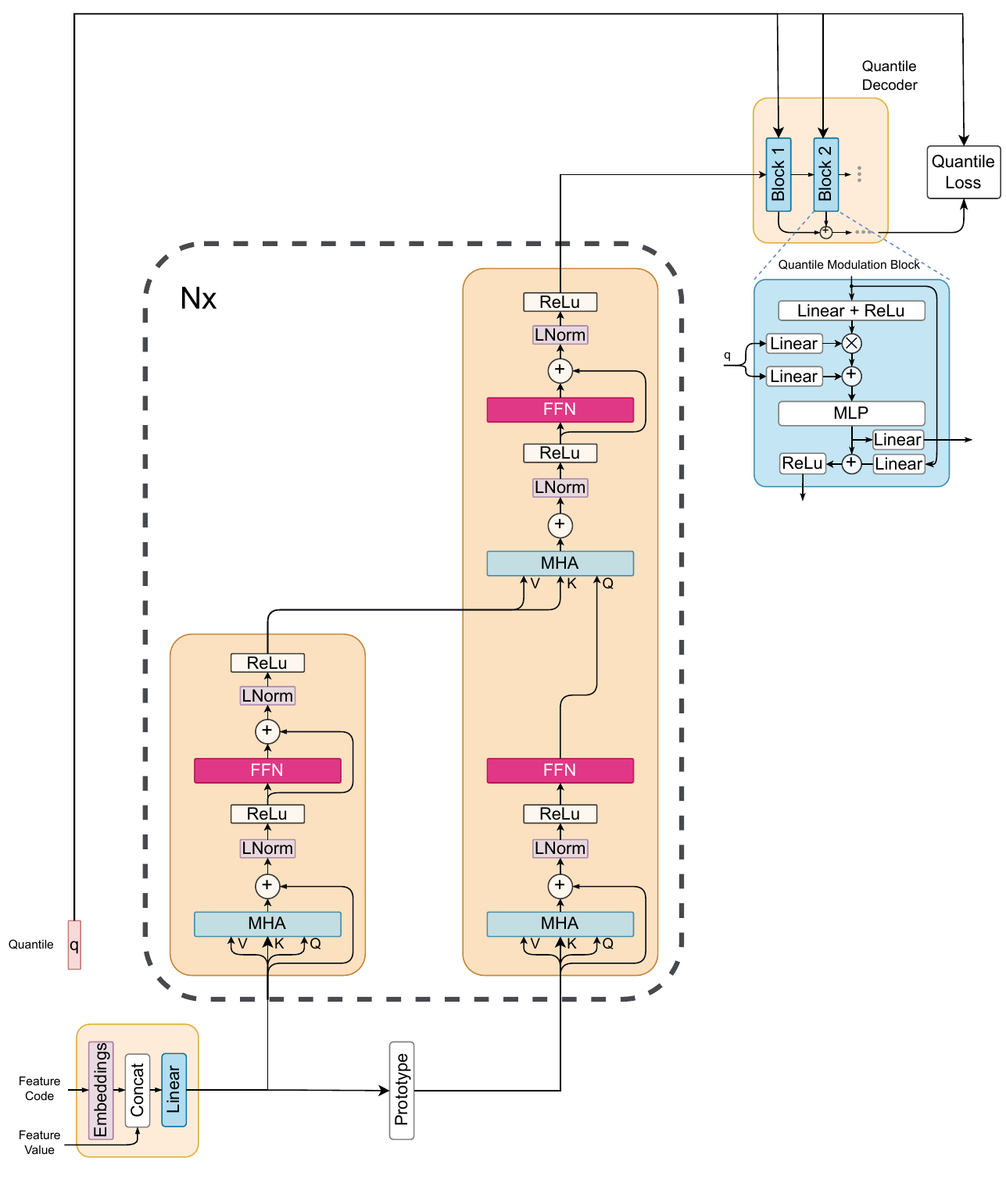}
    \setlength{\belowcaptionskip}{-10pt}
    \caption{Transformer baseline used in our experiments. The feature encoding module is replaced with transformer block. Feature encoding is implemented via self-attention. The extraction of feature encoding is done by applying cross-attention between the prototype of input features and the output of self-attention. This operation is repeated several times corresponding to the number of blocks in transformer encoder.}
    \label{fig:transformer-baseline}
\end{figure}

The ablation study of the transformer architecture is presented in \Cref{tab:transformer-ablation}. It shows that in general, increasing the number of transformer blocks improves accuracy, however, at 8-10 blocks we clearly see diminishing returns. Dropout helps to gain better empirical coverage of the 95\% confidence interval, but this happens at the expense of point prediction accuracy. Finally, the decoder query that is used to produce the feature embedding that is fed to the quantile decoder can be implemented in two principled ways. First, the scheme depicted in \Cref{fig:transformer-baseline}, uses the prototype of features supplied to the encoder. We call it the prototype scheme. Second, the prototype can be replaced by a learnable embedding. Comparing the last and third rows in \Cref{tab:transformer-ablation}, we conclude that the prototype scheme is a clear winner.


\begin{table}[t]
    \centering
    \caption{Ablation study of the Transformer architecture.}
    \renewcommand{\arraystretch}{1.2}
    \begin{tabular}{lllll|rrrr|rr}
        \toprule
         query & d\_model & width & blocks & dp & $\smape$ & $\aad$  & $\bias$ & $\rmse$ & CRPS & \makecell{COVERAGE \\ @ 95} \\ 
        
        \midrule
         proto & 256 & 256 & 4 & 0.1 & 25.6 & 0.462 & -0.01 & 0.918 & 0.313 & 95.2 \\ 
         proto & 256 & 1024 & 4 & 0.1 & 24.5 & 0.414 & -0.02 & 0.845 & 0.292 & 95.1 \\ 
         \midrule
         proto & 256 & 256 & 6 & 0.1 & 23.7 & 0.397 & -0.01 & 0.824 & 0.281 & 94.9 \\ 
         proto & 256 & 512 & 6 & 0.2 &  \\ 
         proto & 256 & 1024 & 6 & 0.1 & 24.3 & 0.407 & -0.01 & 0.840 & 0.287 & 94.9 \\
         proto & 256 & 1024 & 6 & 0.0 & 26.5 & 0.477 & -0.04 & 0.980 & 0.334 & 93.0 \\
         
         \midrule
         proto & 256 & 512 & 8 & 0.0 & 23.3 & 0.388 & -0.03 & 0.814 & 0.276 & 94.3
         \\ 
         proto & 256 & 1024 & 8 & 0.0 & 23.1 & 0.383 & -0.02 & 0.806 & 0.272 & 94.6
         \\ 
         proto & 256 & 1024 & 8 & 0.1 & 23.1 & 0.384 & -0.01 & 0.809 & 0.272 & 94.6 \\ 
         proto & 256 & 512 & 10 & 0.0 & 23.0 & 0.384 & -0.03 & 0.814 & 0.273 & 94.2 \\ 
         proto & 256 & 1024 & 10 & 0.1 & 24.3 & 0.407 & -0.01 & 0.840 & 0.287 & 94.9 \\ 
         \midrule
         proto & 512 & 1024 & 6 & 0.1 &  \\ 
         \midrule
         learn & 256 & 256 & 6 & 0.2 & 35.0 & 0.722 &  -0.16 & 1.406 & 0.489 & 93.9 
         \\ 
        \bottomrule
    \end{tabular}
    \label{tab:transformer-ablation}
\end{table}

\clearpage
\section{\name{}-Local Baseline} \label{sec:niaque-local-baseline}

NIAQUE-local baseline is trained on each dataset individually using the same overall training framework as discussed in the main manuscript for the NIAQUE-global, with the following exceptions. The number of training epochs for each dataset is fixed at 1200, the batch size is set to 256, feature dropout is disabled. Finally, for each dataset we select the best model to be evaluated by monitoring the loss on validation set every epoch.

\begin{table}[t]
    \centering
    \caption{Ablation study of NIAQUE-local model.}
    \renewcommand{\arraystretch}{1.2}
    \begin{tabular}{llll|rrrr|rr}
        \toprule
         blocks & width & dp & layers & $\smape$ & $\aad$  & $\bias$ & $\rmse$ & CRPS & \makecell{COVERAGE \\ @ 95} \\ 
        \midrule
         
        2 & 64 & 0.0 & 3 & 24.2 & 0.414 & -0.03 & 0.848 & 0.292 & 95.1 \\ 

        2 & 128 & 0.0 & 3 & 22.8 & 0.381 & -0.02 & 0.804 & 0.270 & 94.5 \\ 

        2 & 256 & 0.0 & 3 & 22.1 & 0.365 & -0.02 & 0.786 & 0.260 & 94.0 \\

        2 & 512 & 0.0 & 3 & 21.9 & 0.360 & -0.02 & 0.781 & 0.257 & 92.7 \\

        \midrule

        2 & 64 & 0.1 & 3 & 24.7 & 0.431 & -0.07 & 0.855 & 0.305 & 93.3 \\

        2 & 128 & 0.1 & 3 &  23.1 & 0.389 & -0.04 & 0.81 & 0.276 & 94.0 \\

        2 & 256 & 0.1 & 3 & 22.2 & 0.369 & -0.02 & 0.79 & 0.263 & 94.0 \\ 

        2 & 512 & 0.1 & 3 &  22.0 & 0.361 & -0.02 & 0.779 & 0.257 & 93.5 \\

        \midrule

        2 & 64 & 0.0 & 2 & 24.5 & 0.419 & -0.03 & 0.852 & 0.296 & 95.0 \\

        2 & 128 & 0.0 & 2 & 23.4 & 0.391 & -0.02 & 0.815 & 0.276 & 94.7 \\

        2 & 256 & 0.0 & 2 &  22.3 & 0.368 & -0.02 & 0.783 & 0.262 & 94.1 \\

        2 & 512 & 0.0 & 2 & 22.1 & 0.363 & -0.03 & 0.780 & 0.259 & 92.9 \\

        \midrule

        4 & 64 & 0.0 & 2 & 23.8 & 0.399 & -0.02 & 0.828 & 0.282 & 95.1 \\

        4 & 128 & 0.0 & 2 & 22.8 & 0.377 & -0.03 & 0.797 & 0.267 & 94.9 \\
        
        4 & 256 & 0.0 & 2 & 22.0 & 0.363 & -0.02 & 0.788 & 0.259 & 93.5 \\

        4 & 512 & 0.0 & 2 & 22.0 & 0.359 & -0.02 & 0.785 & 0.257 & 92.0 \\

        \midrule

        4 & 64 & 0.1 & 2 & 23.8 & 0.401 & -0.03 & 0.829 & 0.284 & 94.3 \\

        4 & 128 & 0.1 & 2 & 22.9 & 0.379 & -0.03 & 0.801 & 0.267 & 94.6 \\

        4 & 256 & 0.1 & 2 & 22.1 & 0.363 & -0.03 & 0.786 & 0.259 & 93.5 \\

        4 & 512 & 0.1 & 2 & 22.0 & 0.360 & -0.03 & 0.781 & 0.257 & 92.4 \\

        \midrule
        
        8 & 128 & 0.0 & 2 & 23.0 & 0.381 & -0.02 & 0.798 & 0.27 & 95.7 \\

        \bottomrule
    \end{tabular}
    \label{tab:niaque_local_ablation}
\end{table}

\clearpage
\section{\name{} Training Details and Ablation Studies}
\label{sec:niaque-ablation-appendix}

To train both NIAQUE and Transformer models we use feature dropout defined as follows. Given dropout probability $\textrm{dp}$, we toss a coin with probability $\sqrt{\textrm{dp}}$ to determine if the dropout event is going to happen at all for a given batch. If this happens, we remove each feature from the batch, again with probability $\sqrt{\textrm{dp}}$. This way each feature has probability $\textrm{dp}$ of being removed from a given batch and there is a probability $\sqrt{\textrm{dp}}$ that the model will see all features intact in a given batch. The intuition behind this design is that we want to expose the model to all features most of the time, but we also want to create many situations with some feature combinations missing.


\textbf{Input log transformation} defined in~\cref{eqn:log_transformation} is important to ensure the success of the training, as follows both from Table~\ref{tab:lightgbm_ablation} and~\Cref{fig:with_and_without_log}. The introduction of log-transform makes learning curves well-behaved and smooth and translates into much better accuracy.

\textbf{Adding samples containing only one of the features} as input does not significantly affect accuracy. At the same time, the addition of single-feature training rows has very strong effect on the effectiveness of \name{}'s interpretability mechanism. When rows with single feature input are added (\Cref{fig:niaque_interpretability_ablation:0.05,fig:niaque_interpretability_ablation:0.1}), \name{} demonstrates very clear accuracy degradation when top features are removed and insignificant degradation when bottom features are removed. When rows with single feature input are \emph{not} added (\Cref{fig:niaque_interpretability_ablation:0.0}), the discrimination between strong and weak features is poor, with removal of top and bottom features having approximately the same effect across datasets.

\begin{table}[t]
    \centering
    \caption{Ablation study of NIAQUE model.}
    \renewcommand{\arraystretch}{1.2}
    \begin{tabular}{llllll|rrrr|rr}
        \toprule
         blocks & width & dp & layers & singles & \makecell{log \\ input} & $\smape$ & $\aad$  & $\bias$ & $\rmse$ & CRPS & \makecell{COVERAGE \\ @ 95} \\ 
        \midrule

        1 & 1024 & 0.2 & 2 & 5\% & yes & 25.6 & 0.433 & -0.04 & 0.864 & 0.306 & 96.5 \\

        \midrule
        2 & 1024 & 0.2 & 2 & 5\% & yes & 23.1 & 0.384 & -0.02 & 0.802 & 0.272 & 95.7 \\

        2 & 1024 & 0.2 & 3 & 5\% & yes & 22.7 & 0.377 & -0.03 & 0.796 & 0.267 & 95.6 \\

        \midrule
        4 & 1024 & 0.2 & 2 & 5\% & yes  & 22.1 & 0.367 & -0.02 & 0.787 & 0.261 & 94.6 \\ 
        4 & 1024 & 0.2 & 3 & 5\% & yes  & 22.1 & 0.367 & -0.02 & 0.792 & 0.262 & 94.6 \\ 
        8 & 1024 & 0.2 & 2 & 5\% & yes  & 22.0 & 0.366 & -0.02 & 0.798 & 0.264 & 92.7 \\

        \midrule
        4 & 512 & 0.2 & 2 & 0\% & yes & 22.5 & 0.372 & -0.02 & 0.791 & 0.264 & 95.4 \\ 

        4 & 1024 & 0.2 & 2 & 0\% & yes  & 22.1 & 0.366 & -0.02 & 0.791 & 0.261 & 94.2 \\ 

        4 & 1024 & 0.3 & 2 & 0\% & yes  & 22.1 & 0.367 & -0.02 & 0.787 & 0.260 & 94.7 \\ 
        4 & 1024 & 0.4 & 2 & 0\% & yes  & 22.2 & 0.370 & -0.02 & 0.791 & 0.263 & 95.1 \\ 
        4 & 2048 & 0.3 & 2 & 0\% & yes  & 22.1 & 0.366 & -0.02 & 0.795 & 0.263 & 93.4 \\ 
        
        \midrule

        4 & 1024 & 0.2 & 2 & 5\% & no  & 31.4 & 0.530 & -0.066 & 1.017 & 0.371 & 95.6 \\ 
        
        \bottomrule
    \end{tabular}
    \label{tab:lightgbm_ablation}
\end{table}

\begin{figure*}[t]
    \centering
    \includegraphics[width=\textwidth]{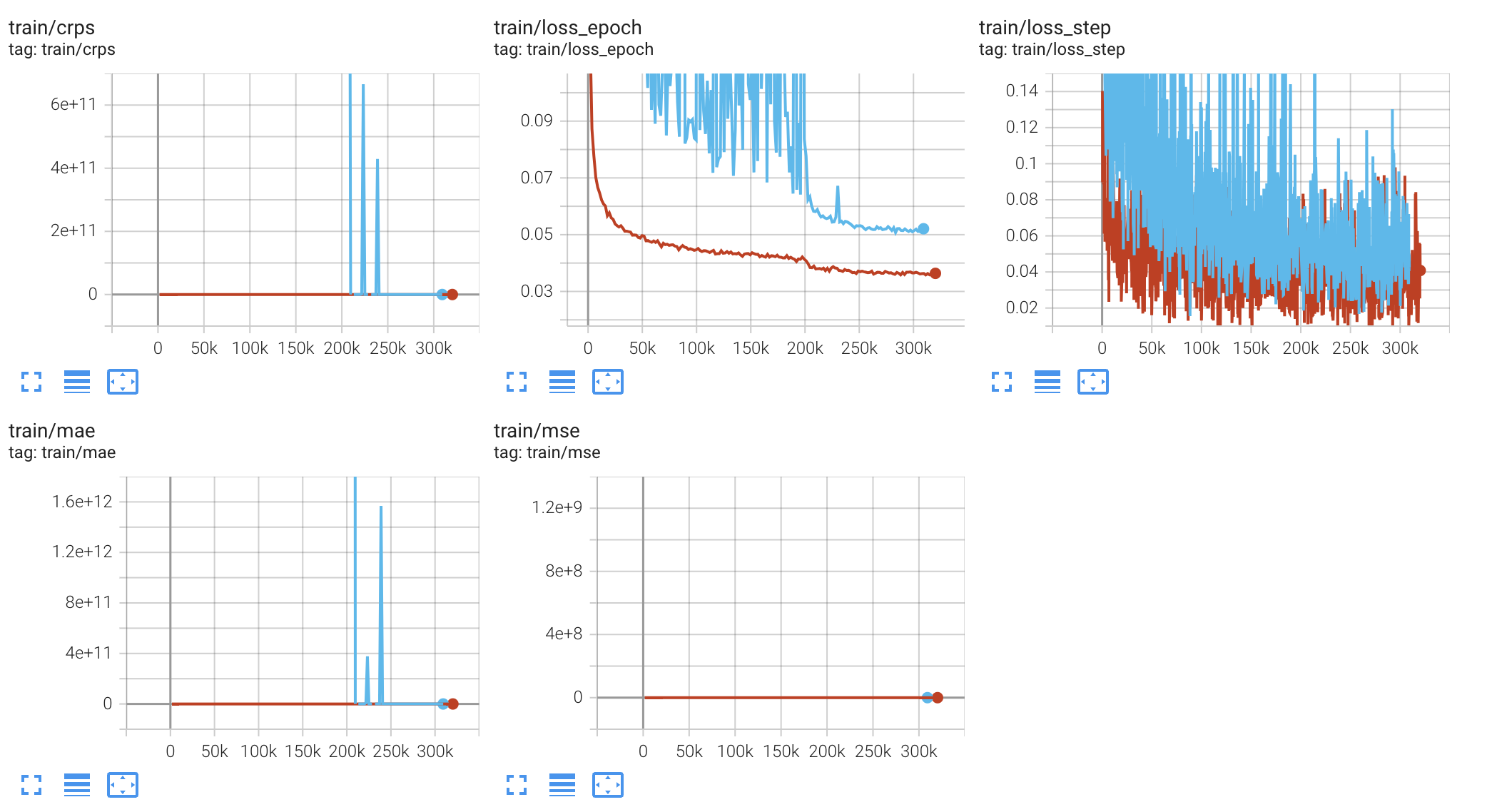}
    \caption{Training losses with (dark red) and without (blue) input value log-transform~\cref{eqn:log_transformation}. The introduction of log-transform makes learning curves well-behaved and smooth.}
    \label{fig:with_and_without_log}
\end{figure*}

\begin{figure*}
    \centering
    
    \begin{subfigure}[b]{0.3\textwidth}
        \centering
        \includegraphics[width=\textwidth]{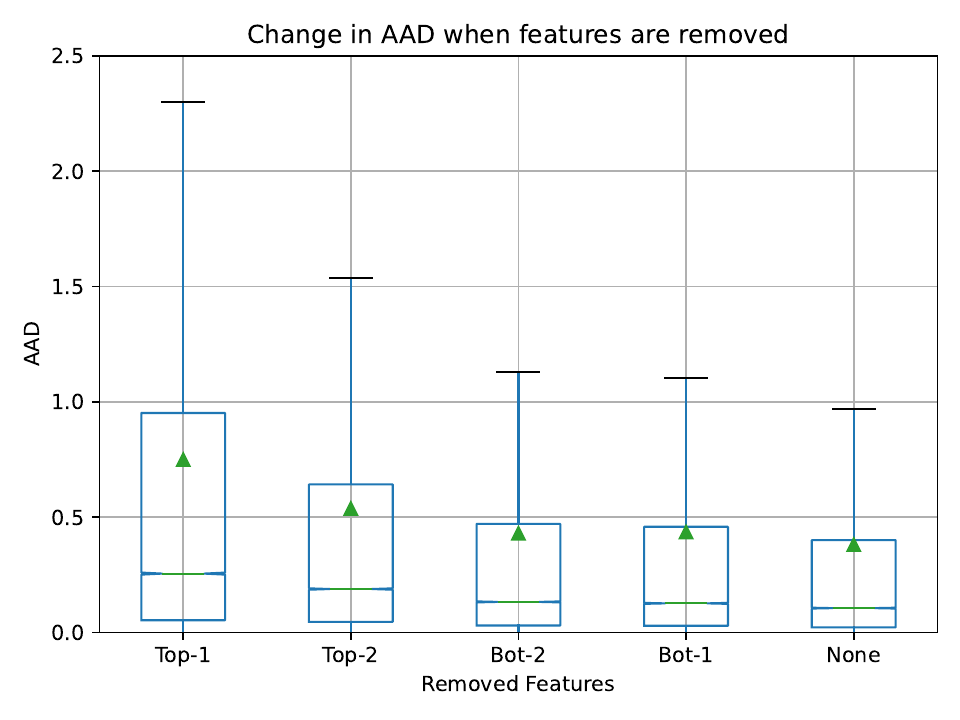}
        \caption{Single features, 10\%}
        \label{fig:niaque_interpretability_ablation:0.1}
    \end{subfigure}
    \begin{subfigure}[b]{0.3\textwidth}
        \centering
        \includegraphics[width=\textwidth]{Figures/interpretability_analysis_singles0.05.pdf}
        \caption{Single features, 5\%}
        \label{fig:niaque_interpretability_ablation:0.05}
    \end{subfigure}
    \begin{subfigure}[b]{0.3\textwidth}
        \centering
        \includegraphics[width=\textwidth]{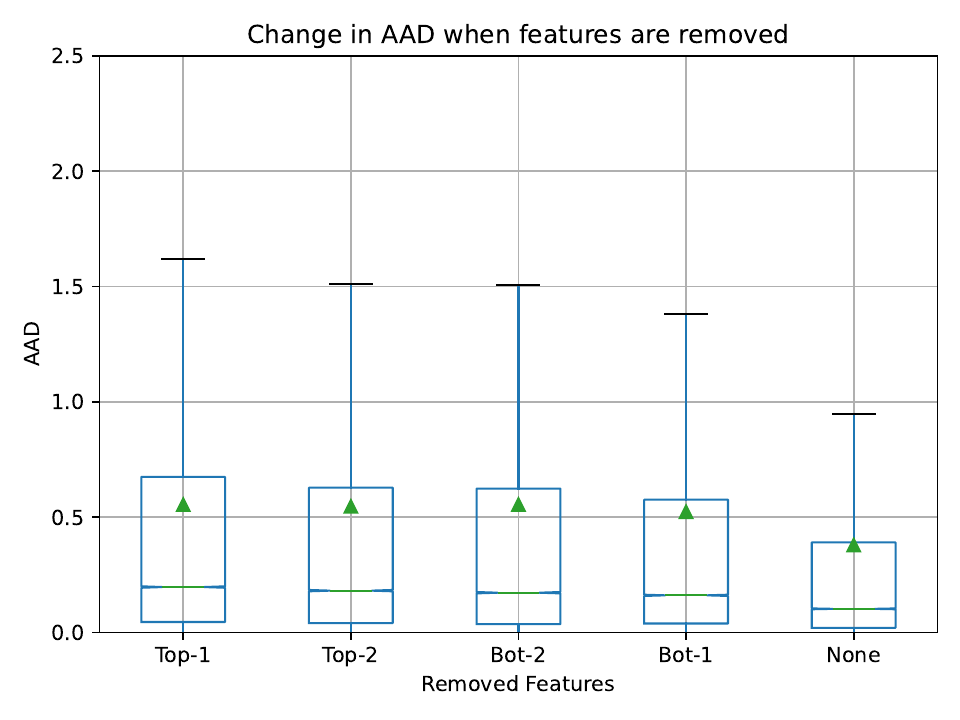}
        \caption{Single features, 0\%}
        \label{fig:niaque_interpretability_ablation:0.0}
    \end{subfigure}
    \caption{The effect of adding training rows containing only one of the input features as \name{} input. When rows with single feature input are added (\Cref{fig:niaque_interpretability_ablation:0.05,fig:niaque_interpretability_ablation:0.1}), \name{} demonstrates very clear accuracy degradation when top features are removed and insignificant degradation when bottom features are removed. When rows with single feature input are \emph{not} added (\Cref{fig:niaque_interpretability_ablation:0.0}), the discrimination between strong and weak features is poor, with removal of top and bottom features having approximately the same effect across datasets.}
    \label{fig:niaque_interpretability_ablation}
\end{figure*}

\end{document}